\newtheorem{theorem}{Theorem}
\newtheorem{lemma}{Lemma}
\newtheorem{assumption}{Assumption}
\def\BibTeX{{\rm B\kern-.05em{\sc i\kern-.025em b}\kern-.08em
    T\kern-.1667em\lower.7ex\hbox{E}\kern-.125emX}}
\begin{document}

\title{CPnP: Consistent Pose Estimator for Perspective-n-Point Problem with Bias Elimination 
}
%
\author{Guangyang Zeng$^1$,  Shiyu Chen$^1$, Biqiang Mu$^2$, Guodong Shi$^3$, and Junfeng Wu$^1$
	\thanks{$^1$G. Zeng, S. Chen, and J. Wu are with the School of Data Science, Chinese University of Hong Kong, Shenzhen, Shenzhen, P. R. China,
		\texttt{ zengguangyang@cuhk.edu.cn,shiyuchen@link.cuhk.edu.cn, junfengwu@cuhk.edu.cn}.}
	\thanks{$^2$B. Mu is with Key Laboratory of Systems and Control, Institute of Systems Science, Academy of Mathematics and Systems Science, Chinese Academy of Sciences, Beijing, P. R. China,
		\texttt{bqmu@amss.ac.cn}.}
	\thanks{$^3$G. Shi is with the Australian Center for Field Robotics, School of Aerospace, Mechanical and Mechatronic Engineering, The University of Sydney, NSW 2008, Sydney,
		\texttt{guodong.shi@sydney.edu.au}.}
	\thanks{The implementation code can be found at \url{https://github.com/SLAMLab-CUHKSZ/CPnP-A-Consistent-PnP-Solver}.}
}
%

%


\maketitle

\begin{abstract}
The Perspective-n-Point (PnP) problem has been widely studied in both computer vision and photogrammetry societies. With the development of feature extraction techniques, a large number of feature points might be available in a single shot. It is promising to devise a consistent estimator, i.e., the estimate can converge to the true camera pose as the number of points increases. To this end, we propose a consistent PnP solver, named \emph{CPnP}, with bias elimination. Specifically, linear equations are constructed from the original projection model via 
measurement model modification and variable elimination, based on which a closed-form least-squares solution is obtained. We then analyze and subtract the asymptotic bias of this solution, resulting in a consistent estimate. Additionally, Gauss-Newton (GN) iterations are executed to refine the consistent solution. Our proposed estimator is efficient in terms of computations---it has $O(n)$ computational complexity. Experimental tests on both synthetic data and real images show that our proposed estimator is superior to some well-known ones for images with dense visual features,
in terms of estimation precision and computing time.
\end{abstract}

%

\section{Introduction}
Given $n$ 3D points and their corresponding 2D projections on the image plane, inferring the pose (rotation and translation) of a calibrated camera is referred to as the Perspective-n-Point (PnP) problem. It has widespread applications in robotics~\cite{qin2018vins,meng2017backtracking,pumarola2017pl}, computer vision~\cite{vakhitov2021uncertainty}, augmented reality~\cite{zhou2018re}, etc. 

Most of the existing works set out from ideal geometric relationships to construct geometry-constrained equations. They do not explicitly take the projection noises into account and overlook the noise propagation in equation transformations. As such, they rarely analyze the property of the designed estimator from the perspective of statistics, such as the bias and covariance of the estimator, which are important metrics in estimation theory. 
It is noteworthy that with the development of feature extraction techniques, a large number of feature points might be available in a single shot. For example, Fig.~\ref{ETH3D_figures} in the experiment part are four images from ETH3D dataset~\cite{schops2017multi}, with two in the outdoor scenario and two in the indoor scenario. All images contain thousands of feature points, exhibiting the huge potential to yield a precise pose estimate. By noting this, we argue that it is promising to revisit the PnP problem through the lens of statistics.

The works that took the projection noises into account include the CEPPnP~\cite{ferraz2014leveraging}, MLPnP~\cite{urban2016mlpnp}, and EPnPU\cite{vakhitov2021uncertainty}. In these works, the covariance matrix of noises is utilized to improve the estimation precision. However, they did not analyze the statistical property of the proposed estimators. Actually, due to the nonlinear nature of projection equations, all of these estimators are biased, and thus not consistent. The definition of \emph{consistent} here is that the estimate can converge in probability to the true value as the number of points increases.
To the best of our knowledge, the design of a consistent PnP solver is still an open problem. 

In this paper, we devise a {\bf c}onsistent {\bf PnP} (CPnP) solver in virtue of bias elimination. Specifically, linear equations are constructed from the original projection model via measurement model modification and variable elimination, and a closed-form least-squares solution is obtained. The least-squares solution is biased, hence we further draw lessons from~\cite{mu2017globally} to give a consistent estimate of the variance of projection noises, based on which the asymptotic bias is eliminated, yielding a consistent estimate of the camera pose. Additionally, we perform constrained Gauss-Newton (GN) iterations to refine the consistent estimate. 
Our proposed PnP solver owns the attractive property that the estimate can converge to the true parameters as the number of points increases. In addition, the solver is efficient---its computational complexity is $O(n)$, which is superior to most of the state-of-the-art algorithms. These two properties make our estimator favorable in the presence of numerous feature points. 

We compare the proposed estimator with some well-known PnP solvers using both synthetic data and a benchmark dataset, named ETH3D~\cite{schops2017multi}. The results show that when the number of feature points exceeds several hundred, our estimator has minimal RMSEs and almost the same CPU time as the EPnP algorithm. In addition, in the case of large noise intensity, the covariance of the proposed estimator can converge to $0$, while the compared estimates have asymptotic biases, failing to converge to the true parameters. 

To summarize, the main contributions of this paper are two-fold: ($i$) We revisit the PnP problem from the perspective of statistics, finding that the existing PnP solvers are generally biased, thus not consistent; ($ii$) On the basis of a series of techniques, including variable elimination, noise estimation, and bias elimination, a consistent $O(n)$ estimator is proposed. 

\section{Related Works}
The last two decades have witnessed great progress on PnP solvers. Some works considered a fixed number of points. The minimal number of points to solve the PnP problem is three, and the prominent solutions to the P3P problem include~\cite{dementhon1992exact,kneip2011novel,li2011stable}. Apart from P3P solvers, there are also P4P~\cite{bujnak2008general} and P5P solutions~\cite{triggs1999camera}. These analytic methods may return inaccurate or erroneous solutions in the presence of noise. 
Most solvers, however, can deal with an arbitrary number of points. These generic PnP methods can be categorized into non-iterative and iterative ones. 

Early non-iterative methods are generally computationally complex, e.g.,~\cite{ansar2003linear} with $O(n^8)$,~\cite{quan1999linear} with $O(n^5)$, and~\cite{fiore2001efficient} with $O(n^2)$. The EPnP solution~\cite{lepetit2009epnp} is the first widely known $O(n)$ solver for the PnP problem and is utilized in many robot applications~\cite{boukas2015towards,lee2020camera,lambrecht2021optimizing}. Its main idea is to represent the 3D point coordinates as a linear combination of four control points. These control points are then retrieved using linearization techniques.
Apart from the linearization techniques, the polynomial solvers have become a mainstream, e.g., the RPnP~\cite{li2012robust}, DLS~\cite{hesch2011direct}, OPnP~\cite{zheng2013revisiting}, SRPnP~\cite{wang2018simple}, and EOPnP~\cite{zhou2019efficient}. These works all estimate the camera pose via solving polynomial equations with $O(n)$ complexity.

Iterative methods solve PnP optimization problems in an iterative manner. One main issue of iterative methods is the set of the initial estimate---they can achieve excellent precision when they converge properly. LHM~\cite{lu2000fast} and FP~\cite{pavlakos20176} initialized the estimate with a weak perspective approximation and refined the estimate via successive iterations. SQPnP~\cite{terzakis2020consistently} set several initials and then each regional minimum was computed with a sequential quadratic programming scheme. The aforementioned non-iterative solutions are often used as initial values, and iterative algorithms, e.g., Gauss-Newton (GN) iterations are applied to refine the estimate. For instance, EPnP~\cite{lepetit2009epnp} and MLPnP~\cite{urban2016mlpnp} both evaluated the performance by adding the GN iterations. 

It is noteworthy that most of the existing works have not modelled the projection noises, based on which the solver can be optimized accordingly. The literature that took projection noise  into account includes CEPPnP~\cite{ferraz2014leveraging}, MLPnP~\cite{urban2016mlpnp}, PLUM~\cite{li2017combining}, and~EPnPU\cite{vakhitov2021uncertainty}. In these works, the covariance matrix of noises is utilized to improve the estimation precision. However, there is little research on the statistical properties of 
their proposed estimators.
Actually, due to the inherent nonlinear nature of the camera projection model, all of these estimators are biased, and thus not consistent. In this paper, we take the noise model into account and devise an estimator with bias elimination. The proposed estimator is consistent, i.e., the estimate converges to the true value as the number of feature points increases.

\section{Problem Formulation}
{\bf Notations:} For a noisy quantity ${\bf a}$, we use ${\bf a}^o$ to denote its true value. For a vector ${\bf a}$, $[{\bf a}]_i$ is the $i$-th element of ${\bf a}$. ${\bf 1}_{i \times j}$ and ${\bf 0}_{i \times j}$ are $i \times j$ matrices whose elements are all $1$ and $0$. 

As shown in Fig.~\ref{PnP_scenario}, suppose there are $n$ points whose coordinates in the world frame $\{W\}$ are ${\bf p}_i^w=[x_{i}^w~y_{i}^w~z_{i}^w]^\top, i=1,\ldots,n$. Denote their coordinates in the camera frame $\{C\}$ as ${\bf p}_i^c=[x_{i}^c~y_{i}^c~z_{i}^c]^\top, i=1,\ldots,n$. Then, given the rotation matrix $\bf R$ and transformation vector $\bf t$ from the world frame to the camera frame, it holds that ${\bf p}_i^c = {\bf R} {\bf p}_i^w + {\bf t}$.
Further, for a calibrated pinhole camera with the intrinsic matrix
\begin{equation*}
{\bf K} = \begin{bmatrix}
f_x~~0~~u_0 \\
0~~f_y~~v_0 \\
0~~~0~~~1
\end{bmatrix},
\end{equation*}
the ideal projection equation is 
\begin{equation} \label{cv_projection_equation}
s_i \begin{bmatrix}
{\bf q}_i \\
1
\end{bmatrix} = 
{\bf K} [{\bf R}~{\bf t}] \begin{bmatrix}
{\bf p}_i^w \\
1
\end{bmatrix},
\end{equation}
where $s_i$ is the scale factor. From the third line of~\eqref{cv_projection_equation}, we have $s_i={\bf e}_3^\top ({\bf R} {\bf p}_i^w + {\bf t})$, where ${\bf e}_i$ is the unit vector whose $i$-th element is $1$. Considering projection noises,~\eqref{cv_projection_equation} can be rewritten as 
\begin{equation} \label{noise_free_projection}
{\bf q}_i = \frac{{\bf W}{\bf E} ({\bf R} {\bf p}_i^w + {\bf t})}{{\bf e}_3^\top ({\bf R} {\bf p}_i^w + {\bf t})} +\begin{bmatrix}
u_0 \\
v_0
\end{bmatrix} + {\bm \epsilon}_i,
\end{equation}
where ${\bf W}={\rm diag}(f_x,f_y)$, ${\bf E}=[{\bf e}_1~{\bf e}_2]^\top$, and ${\bm \epsilon}_i$ is the projection noise. For ${\bm \epsilon}_i$, we assume that 
\begin{assumption} \label{noise_assump}
	The measurement noises ${\bm \epsilon}_i \sim \mathcal N (0,\sigma^2 I),i=1,\ldots,n$ are i.i.d. with unknown variance $\sigma^2 < \infty$.
\end{assumption}
Assumption~\ref{noise_assump} has been widely used in simulations for the PnP problem, e.g.,~\cite{lepetit2009epnp,hesch2011direct,urban2016mlpnp}. Since the intrinsic matrix is known, for simplicity, we can obtain
\begin{equation} \label{projection_equation}
{\bf q}_i' =  {\bf q}_i - \begin{bmatrix}
u_0 \\
v_0
\end{bmatrix}   =\frac{{\bf W}{\bf E} ({\bf R} {\bf p}_i^w + {\bf t})}{{\bf e}_3^\top ({\bf R} {\bf p}_i^w + {\bf t})}+{\bm \epsilon}_i.
\end{equation}
The PnP problem is to estimate $\bf R$ and $\bf t$ from the $n$ correspondences between ${\bf p}_i^w$ and ${\bf q}_i'$. A prevalent criterion to do so is to minimize the sum of squared reprojection errors:
\begin{subequations}\label{LS_problem}
	\begin{align}
	\mathop{\rm minimize~}\limits_{{\bf R},{\bf t}} ~& \frac{1}{n} \sum_{i=1}^{n} \left\|  \frac{{\bf W}{\bf E} ({\bf R} {\bf p}_i^w + {\bf t})}{{\bf e}_3^\top ({\bf R} {\bf p}_i^w + {\bf t})} - {\bf q}_i' \right\|^2  \label{LS_objective} \\
	\mathop{\rm subject~to~} ~& {\bf R} \in {\rm SO}(3),\label{LS_constraint}
	\end{align}
\end{subequations}
The constrained least-squares problem~\eqref{LS_problem} is a nonconvex optimization problem whose global solution is hard to solve. In the following section, we relax the constraint~\eqref{LS_constraint} and design a consistent estimator for $\bf R$ and $\bf t$. 

	\begin{figure}[!htb]
	\centering
	\includegraphics[width=0.68\columnwidth]{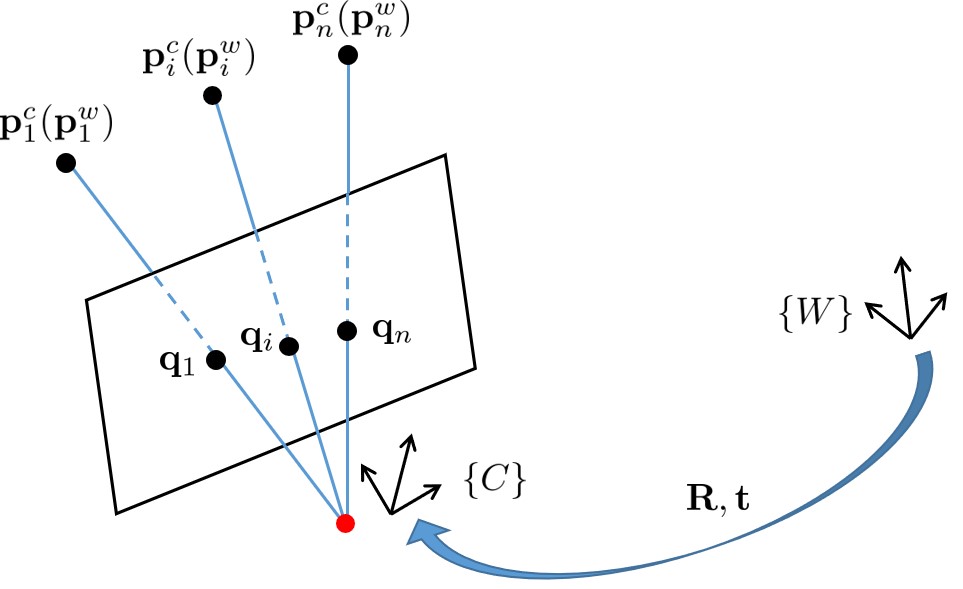}
	\caption{Illustration of the PnP problem.}
	\label{PnP_scenario}
\end{figure}

\section{Relaxed Least-Squares Solution: A Consistent Pose Estimate}
In this section, we will first modify the original measurement equations~\eqref{projection_equation} to obtain linear ones. We then conduct variable elimination to avoid scale ambiguity. Additionally, we give a consistent estimate of the covariance of projection noises, based on which a bias-eliminated least-squares solution is proposed. Further, the consistent estimate is refined with GN iterations. 

\subsection{Modified Measurement equations}
By multiplying both sides of~\eqref{projection_equation} by ${\bf e}_3^\top ({\bf R} {\bf p}_i^w + {\bf t})$ we obtain a modified linear measurement model: 
\begin{equation} \label{modified_measurement_equation}
{\bf W}{\bf E} ({\bf R} {\bf p}_i^w + {\bf t}) - {\bf e}_3^\top ({\bf R} {\bf p}_i^w + {\bf t}) {\bf q}_i' + {\bf e}_3^\top ({\bf R} {\bf p}_i^w + {\bf t}) {\bm \epsilon}_i=0,
\end{equation}
where ${\bf e}_3^\top ({\bf R} {\bf p}_i^w + {\bf t}) {\bm \epsilon}_i$ is the scaled noise term.
Note that~\eqref{modified_measurement_equation} is linear in terms of $\bf R$ and $\bf t$. Let $\bar {\bf L}_i=\left[{{\bf p}_i^w}^\top~1 \right]  \otimes {\bf I}_3 \in \mathbb R^{3 \times 12}$. By vectorizing $[{\bf R}~{\bf t}]$, i.e., ${\bf x} ={\rm vec} \left([{\bf R}~{\bf t}] \right) $, we can concatenating~\eqref{modified_measurement_equation} for all $n$ reference points to obtain the matrix form of the modified measurement 
model:
\begin{equation}\label{linear_equations}
{\bf 0}={\bf M} {\bf x}+{\bm \epsilon}^{\prime},
\end{equation}
where 
${\bf M}=[{\bf M}_1^\top~\cdots~{\bf M}_n^\top]^\top$, ${\bf M}_i=({\bf W}{\bf E}-{\bf q}_i'{\bf e}_3^\top) \bar{\bf L}_i$ and ${\bm \epsilon}^{\prime}=\left[ 
{\bf e}_3^\top ({\bf R} {\bf p}_1^w + {\bf t}) {\bm \epsilon}_1^\top ~\cdots~{\bf e}_3^\top ({\bf R} {\bf p}_n^w + {\bf t}) {\bm \epsilon}_n^\top
\right] ^\top$. 

For the modified measurement equation~\eqref{linear_equations}, on the one hand, the regressand is $\bf 0$; on the other hand, due to the scale ambiguity, the regressor $\bf M$ is not full column rank. Therefore, the estimate of $\bf x$ cannot be calculated in a closed form.
In the following, we will conduct variable elimination to avoid scale ambiguity and make the regressand a nonzero vector. After this procedure, the number of variables to be estimated is reduced from $12$ to $11$.

\subsection{Variable Elimination}
To eliminate the scale ambiguity, we introduce the following constraint: 
\begin{equation}\label{variable_elimination}
\alpha \sum_{i=1}^{n} {\bf e}_3^\top ({\bf R} {\bf p}_i^w + {\bf t})=n,
\end{equation}
where $\alpha$ is the scale factor. 
Let ${\bf R}=[{\bf r}_1~{\bf r}_2~{\bf r}_3]^\top$, and ${\bf t}=[t_1~t_2~t_3]^\top$. Define $\bar {\bf p}^w :=\sum_{i=1}^{n} {\bf p}_i^w /n$. From~\eqref{variable_elimination} we have 
\begin{equation}\label{t3_expression}
\alpha t_3 = 1- \alpha {\bar {\bf p}{{}^w}}^\top {\bf r}_3.
\end{equation}
Substituting $t_3$ into~\eqref{modified_measurement_equation} yields 
\begin{equation}\label{variable_eliminate_equation}
{\bf q}_i'=\alpha {\bf W}{\bf E} ({\bf R} {\bf p}_i^w + {\bf t}) - \alpha \left({\bf p}_i^w- \bar {\bf p}^w \right) ^\top {\bf r}_3 {\bf q}_i'+ {\bm \varepsilon}_i,
\end{equation}
where ${\bm \varepsilon}_i=(1+\alpha \left({\bf p}_i^w- \bar {\bf p}^w \right)^\top {\bf r}_3) {\bm \epsilon}_i$.
By the above variable elimination, the number of unknown variables is reduced from $12$ to $11$. We stack the $11$ variables as follows:
\begin{equation}\label{11_variables}
{\rm \bm \theta}:= [\hat{\bf r}_3^\top~\hat{\bf r}_1^\top~\hat t_1~\hat {\bf r}_2^\top~\hat t_2]^\top =\alpha [{\bf r}_3^\top~{\bf r}_1^\top~t_1~{\bf r}_2^\top~t_2]^\top. 
\end{equation}

Given the vector $\bm \theta$, the rotation matrix $\bf R$ and transformation vector $\bf t$ along with the scale factor $\alpha$ can be uniquely recovered; see~\eqref{recover_alpha}-\eqref{recover_t}. This is due to ${\rm det}(\bf R)=1$ since ${\bf R} \in {\rm SO}(3)$.
The same variable elimination method is adopted in~\cite{zheng2013revisiting}. Compared with the prevalent strategy in DLT-based methods, i.e., setting the constraint $\|{\bf x}\|=1$, which leads to a nonlinear relationship among variables, the constraint in~\eqref{variable_elimination} owns the advantage that the resulting equation~\eqref{t3_expression} has a linear form. This facilitates the construction of the following linear system of equations.
Let ${\bf q}_i'=[u_i~v_i]^\top$, by concatenating~\eqref{variable_eliminate_equation} for all reference points, we obtain the following matrix form:
\begin{equation}\label{modified_matrix_form}
{\bf b}= {\bf A} {\bm \theta} + {\bm \varepsilon},
\end{equation}
where ${\bf b}=[{\bf q}_1'^\top~\cdots~{\bf q}_n'^\top]^\top$, ${\bm \varepsilon}=[{\bm \varepsilon}_1^\top~\cdots~{\bm \varepsilon}_n^\top]^\top$, and 
\begin{equation*}
{\bf A}=\begin{bmatrix}
-u_1 \left({\bf p}_1^w- \bar {\bf p}^w \right) ^\top~f_x{{\bf p}_1^w}^\top~f_x~{\bf 0}_{1 \times 4} \\
-v_1 \left({\bf p}_1^w- \bar {\bf p}^w \right) ^\top~{\bf 0}_{1 \times 4}~f_y{{\bf p}_1^w}^\top~f_y \\
\vdots \\
-u_n \left({\bf p}_n^w- \bar {\bf p}^w \right) ^\top~f_x{{\bf p}_n^w}^\top~f_x~{\bf 0}_{1 \times 4} \\
-v_n \left({\bf p}_n^w- \bar {\bf p}^w \right) ^\top~{\bf 0}_{1 \times 4}~f_y{{\bf p}_n^w}^\top~f_y
\end{bmatrix}.
\end{equation*}
Compared with~\eqref{linear_equations},~\eqref{modified_matrix_form} has the advantages that it is a nonhomogeneous system, and the matrix $\bf A$ has full column rank in general. Hence, a closed-form solution is given by 
\begin{equation}\label{biased_LS_solution}
\hat {\bm \theta}_n^{\rm B}=\left( {\bf A}^\top {\bf A}\right) ^{-1} {\bf A}^\top {\bf b}.
\end{equation}
It is noteworthy that the projections ${\bf q}_i'=[u_i~v_i]^\top$ contain noises and thus the regressor $\bf A$ and the regressand $\bf b$ are correlated, which leads to biasedness of $\hat {\bm \theta}_n^{\rm B}$. In the following subsection, we give a consistent estimate of the variance of projection noises, based on which the asymptotic bias of $\hat {\bm \theta}_n^{\rm B}$ can be eliminated.

\subsection{Least-Squares Solution with Bias Elimination}
First, we use the method in~\cite{mu2017globally} to obtain a consistent estimate of $\sigma^2$, denoted as ${\hat \sigma}^2$. The details are presented in Appendix A. Based on the consistent estimate of noise covariance, we are on the point to eliminate the bias of the solution given in~\eqref{biased_LS_solution}.
Specifically, define
\begin{equation*}
{\bf G}=\begin{bmatrix}
- \left({\bf p}_1^w- \bar {\bf p}^w \right) ^\top ~ {\bf 0}_{1 \times 8} \\
- \left({\bf p}_1^w- \bar {\bf p}^w \right) ^\top ~ {\bf 0}_{1 \times 8} \\
\vdots \\
- \left({\bf p}_n^w- \bar {\bf p}^w \right) ^\top ~ {\bf 0}_{1 \times 8} \\
-\left({\bf p}_n^w- \bar {\bf p}^w \right) ^\top ~ {\bf 0}_{1 \times 8} 
\end{bmatrix} \in \mathbb R^{2n \times 11}.
\end{equation*}
The bias-eliminated solution is given as
\begin{equation}\label{bias_eliminated_solution}
\hat {\bm \theta}_n^{\rm BE}=\left( {\bf A}^\top {\bf A}- {\hat \sigma}^2 {\bf G}^\top {\bf G}\right) ^{-1} \left( {\bf A}^\top {\bf b}-  {\hat \sigma}^2 {\bf G}^\top {\bf 1}_{2n \times 1} \right) .
\end{equation}
To ensure the consistency of $\hat {\bm \theta}_n^{\rm BE}$, we make the following assumption.
	\begin{assumption} \label{reference_point_assump}
		The reference points $({\bf p}_i^w)_{i=1}^{n}$ do not concentrate on any critical set given in Result 22.5 in~\cite{hartley2003multiple}. 
	\end{assumption}
Assumption~\ref{reference_point_assump} is an assumption on the asymptotic distribution of 3D feature points. It can be satisfied in general, guaranteeing that the camera matrix ${\bf K} [{\bf R}~{\bf t}]$ in~\eqref{cv_projection_equation} is unique~\cite{hartley2003multiple}. The following theorem presents the asymptotic property of the bias-eliminated solution $\hat {\bm \theta}_n^{\rm BE}$. 
\begin{theorem} \label{consistency_theorem}
Given Assumptions~\ref{noise_assump}-\ref{reference_point_assump}, the estimate $\hat {\bm \theta}_n^{\rm BE}$ is consistent, i.e., 
\begin{equation*}
    \mathop{\rm plim}\limits_{n \rightarrow \infty} ~\hat {\bm \theta}_n^{\rm BE}={\bm \theta}^o.
\end{equation*}
\end{theorem}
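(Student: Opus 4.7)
\emph{Proof plan.} The plan is to pass to the large-$n$ limit in the linear system defining $\hat{\bm\theta}_n^{\rm BE}$ and show that the limiting system identifies ${\bm\theta}^o$ uniquely. First I would decompose ${\bf A} = {\bf A}^o + \Delta{\bf A}$ and ${\bf b} = {\bf b}^o + \Delta{\bf b}$, where ${\bf A}^o$ and ${\bf b}^o$ are obtained from ${\bf A}$ and ${\bf b}$ by replacing each noisy projection ${\bf q}_i'$ with its noise-free counterpart ${\bf q}_i^o = {\bf q}_i' - {\bm\epsilon}_i$. The noise-free version of~\eqref{variable_eliminate_equation} then yields the identity ${\bf A}^o{\bm\theta}^o = {\bf b}^o$. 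Inspecting the definitions of ${\bf A}$ and ${\bf b}$, only the first three columns of ${\bf A}$ carry stochastic entries---namely $-[{\bm\epsilon}_i]_1({\bf p}_i^w-\bar{\bf p}^w)^\top$ in odd rows and $-[{\bm\epsilon}_i]_2({\bf p}_i^w-\bar{\bf p}^w)^\top$ in even rows---while $\Delta{\bf b}=[{\bm\epsilon}_1^\top\cdots{\bm\epsilon}_n^\top]^\top$.

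Second, I would expand
\[
{\bf A}^\top{\bf A} = ({\bf A}^o)^\top{\bf A}^o + ({\bf A}^o)^\top\Delta{\bf A} + \Delta{\bf A}^\top{\bf A}^o + \Delta{\bf A}^\top\Delta{\bf A}
\]
and divide by $n$. Under Assumption~\ref{noise_assump}, the weak law of large numbers sends the two cross terms to zero in probability (they have zero mean and are linear in ${\bm\epsilon}_i$, which is independent of ${\bf A}^o$). A direct calculation shows that $\Delta{\bf A}^\top\Delta{\bf A}$ is supported only on its upper-left $3\times 3$ block, where it equals $\sum_i([{\bm\epsilon}_i]_1^2+[{\bm\epsilon}_i]_2^2)({\bf p}_i^w-\bar{\bf p}^w)({\bf p}_i^w-\bar{\bf p}^w)^\top$; its normalized mean is exactly $\sigma^2$ times the same block of ${\bf G}^\top{\bf G}/n$, so combined with $\hat\sigma^2\xrightarrow{p}\sigma^2$ (Appendix~A) and Slutsky's theorem,
\[
\frac{{\bf A}^\top{\bf A}-\hat\sigma^2{\bf G}^\top{\bf G}}{n}\xrightarrow{p}\lim_{n\to\infty}\frac{({\bf A}^o)^\top{\bf A}^o}{n} =: {\bf Q}^{\ast}.
\]
An analogous expansion of ${\bf A}^\top{\bf b}/n$ produces the limit ${\bf Q}^{\ast}{\bm\theta}^o$; here the would-be bias term $\Delta{\bf A}^\top\Delta{\bf b}/n$ already has zero mean because it reduces to sums of the form $\sum_i({\bf p}_i^w-\bar{\bf p}^w)=0$, and the correction $\hat\sigma^2{\bf G}^\top{\bf 1}_{2n\times 1}/n$ is in fact identically zero for the same reason.

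Third, I would invoke Assumption~\ref{reference_point_assump} to conclude that ${\bf Q}^{\ast}$ is nonsingular: the critical sets in Result~22.5 of~\cite{hartley2003multiple} are precisely the 3D configurations for which ${\bf K}[{\bf R}~{\bf t}]$ fails to be identified by the noise-free correspondences, so ruling them out forces ${\bm\theta}^o$ to be the unique solution of $({\bf A}^o)^\top{\bf A}^o{\bm\theta}=({\bf A}^o)^\top{\bf b}^o$ and hence ${\bf Q}^{\ast}$ to be invertible. The continuous mapping theorem applied to the matrix inverse and product then yields $\hat{\bm\theta}_n^{\rm BE}\xrightarrow{p}({\bf Q}^{\ast})^{-1}{\bf Q}^{\ast}{\bm\theta}^o={\bm\theta}^o$. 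The main obstacle I anticipate is this last identifiability step: turning the geometric statement ``the ${\bf p}_i^w$'s avoid the critical sets'' into the algebraic statement ``$\lim_n({\bf A}^o)^\top{\bf A}^o/n$ is nonsingular'' requires checking that the change of coordinates~\eqref{11_variables} maps the unique true camera matrix to an invertible point of the reduced $11$-parameter system. Everything else---the LLN-type convergence and the exact matching $\mathbb{E}[\Delta{\bf A}^\top\Delta{\bf A}]/n=\sigma^2{\bf G}^\top{\bf G}/n$---is routine bookkeeping once the structure of $\Delta{\bf A}$ is laid out.
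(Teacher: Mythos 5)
Your proposal is correct and follows essentially the same route as the paper's Appendix~B: the decomposition ${\bf A}={\bf A}^o+\Delta{\bf A}$ is exactly the paper's ${\bf A}={\bf B}+{\bf F}$, the identity $\mathbb{E}[\Delta{\bf A}^\top\Delta{\bf A}]=\sigma^2{\bf G}^\top{\bf G}$ together with $\hat\sigma^2\xrightarrow{p}\sigma^2$ is the same bias-matching step, and invertibility of the limiting Gram matrix under Assumption~\ref{reference_point_assump} is invoked identically. The only substantive differences are that the paper tracks $O_p(1/\sqrt{n})$ remainders through an intermediate oracle estimator $\hat{\bm\theta}_n^{\rm UB}$ (thereby obtaining $\sqrt{n}$-consistency rather than mere consistency), and that your observation that ${\bf G}^\top{\bf 1}_{2n\times 1}$ vanishes identically (since $\sum_i({\bf p}_i^w-\bar{\bf p}^w)=0$) is correct but not made explicit in the paper.
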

The proof of Theorem~\ref{consistency_theorem} is presented in Appendix B. 
Given $\hat {\bm \theta}_n^{\rm BE}$, according to~\eqref{11_variables}, the estimate of $\bf R$ and $\bf t$ can be calculated as follows:
{\small
\begin{align}
\hat \alpha & = \left( {\rm det}\left(\begin{bmatrix}
\left[ \hat {\bm \theta}_n^{\rm BE}\right] _{4:6}~~\left[ \hat {\bm \theta}_n^{\rm BE}\right] _{8:10}~~\left[ \hat {\bm \theta}_n^{\rm BE}\right] _{1:3}
\end{bmatrix}^\top \right) \right)^{1/3}, \label{recover_alpha}\\
\hat {\bf R}_n^{\rm BE} & = 
\begin{bmatrix}
\left[ \hat {\bm \theta}_n^{\rm BE}\right] _{4:6}~~\left[ \hat {\bm \theta}_n^{\rm BE}\right] _{8:10}~~\left[ \hat {\bm \theta}_n^{\rm BE}\right] _{1:3} 
\end{bmatrix}^\top \Big/ \hat \alpha, \label{recover_R}\\
\hat {\bf t}_n^{\rm BE} & = 
\begin{bmatrix}
\left[ \hat {\bm \theta}_n^{\rm BE}\right] _{7}~~\left[ \hat {\bm \theta}_n^{\rm BE}\right] _{11}~~1-{{}\bar {\bf p}^w}^\top \left[ \hat {\bm \theta}_n^{\rm BE}\right] _{1:3}
\end{bmatrix}^\top \Big/ \hat \alpha. \label{recover_t}
\end{align}}
Since $\hat {\bm \theta}_n^{\rm BE}$ is consistent, so are $\hat {\bf R}_n^{\rm BE}$ and $\hat {\bf t}_n^{\rm BE}$. 

Note that the matrix $\hat {\bf R}_n^{\rm BE}$ does not necessarily fall within ${\rm SO}(3)$. Hence, we should further project it into ${\rm SO}(3)$.
Let the SVD of $\hat {\bf R}_n^{\rm BE}$ be $\textbf{U}{\bf \Sigma}{\bf V}^\top$, then the following projection gives the closest element in ${\rm SO}(3)$ to $\hat {\bf R}_n^{\rm BE}$ in terms of the Frobenius norm~\cite{arun1987least}:
\begin{equation} \label{SVD_projection}
    \pi(\hat {\bf R}_n^{\rm BE})= \textbf{U}{\rm diag}([1~1~\det(\textbf{U}{\bf V}^{\rm T})]^{\rm T}) {\bf V}^{\rm T}.
\end{equation}
With a little abuse of notations, we still denote the projected matrix as $\hat {\bf R}_n^{\rm BE}$.

\subsection{Constrained Gauss-Newton Iteration}

With the consistent preliminary estimates $\hat {\bf R}_n^{\rm BE}$ and $\hat {\bf t}_n^{\rm BE}$, local methods such as the GN iteration associated with measurement equation~\eqref{projection_equation} can be applied to further improve the estimation precision. Let ${\bf L}_i={{\bf p}_i^w}^\top \otimes {\bf I}_3 \in \mathbb R^{3 \times 9}$. The measurement equation~\eqref{projection_equation} can be rephrased as
\begin{equation}\label{projection_equation2}
    {\bf q}_i' =  \frac{{\bf W} {\bf E} \left( {\bf L}_i {\rm vec}({\bf R})+{\bf t}\right)}{{\bf e}_3^\top \left( {\bf L}_i {\rm vec}({\bf R})+{\bf t}\right)}+{\bm \epsilon}_i.
\end{equation}
Note that the rotation matrix ${\bf R}$ needs to be in ${\rm SO}(3)$. Hence, we cannot directly use the Jacobian obtained by taking derivatives with respect to ${\bf R}$ in the GN iterations. Note that we can represent the rotation matrix ${\bf R}$ in the vicinity of a given matrix ${\bf R}_0$ as ${\bf R}= {\bf R}_0\exp({\bf s}^\wedge)$, where ${\bf R}_0\in{\rm SO}(3)$ and ${\bf s}^\wedge$ is a skew-symmetric matrix generated by the ``hat'' operation:
$${\bf s}^\wedge=\begin{bmatrix}
0 & -s_3 & s_2 \\
s_3 & 0 & -s_1 \\
-s_2 & s_1 & 0 
\end{bmatrix}
$$
with ${\bf s}=[s_1~s_2~s_3]^\top$. We calculate the Jacobian associated with $\bf s$ to guarantee the updated estimate of the rotation matrix is still in ${\rm SO}(3)$. Define 
\begin{equation*}
{\bm \Psi}:=\frac{\partial {\rm vec}(\exp(\bf s^{\wedge}))}{
	\partial {\bf s}^\top} \Big\rvert_{\bf s=0},
\end{equation*}
and 
\begin{equation*}
\begin{split}
f_{i}({\bf R},{\bf s},{\bf t})&:=\frac{{\bf W}{\bf E}\left( {\bf L}_i {\rm vec}({\bf R}\exp(\bf s^{\wedge}))+{\bf t}\right)}{{\bf e}_3^\top\left( {\bf L}_i {\rm vec}({\bf R}\exp(\bf s^{\wedge}))+{\bf t}\right)}, \\
g_{i}({\bf R},{\bf s},{\bf t})&:={\bf W}{\bf E}\left( {\bf L}_i {\rm vec}({\bf R}\exp(\bf s^{\wedge}))+{\bf t}\right), \\
h_{i}({\bf R},{\bf s},{\bf t})&:={\bf e}_3^\top\left( {\bf L}_i {\rm vec}({\bf R}\exp(\bf s^{\wedge}))+{\bf t}\right).
\end{split}
\end{equation*}
The derivatives are 
{\small
\begin{align*}
\frac{\partial f_{i}({\bf R},{\bf s},{\bf t})}{\partial {\bf s}^\top} \Big\rvert_{{\bf s}=0} =& \frac{(h_{i}({\bf R},0,{\bf t}) {\bf W}{\bf E} -g_{i}({\bf R},0,{\bf t}) {\bf e}_3^\top) {{\bf p}_i^w}^\top \otimes {\bf R} {\bm \Psi}}{h_{i}({\bf R},0,{\bf t})^2}\\
\frac{\partial f_{i}({\bf R},{\bf s},{\bf t})}{\partial {\bf t}^\top} \Big\rvert_{{\bf s}=0} =&\frac{h_{i}({\bf R},0,{\bf t}) {\bf W}{\bf E} -g_{i}({\bf R},0,{\bf t}){\bf e}_3^\top}{h_{i}({\bf R},0,{\bf t})^2}.
\end{align*}}
Then we can obtain the Jacobian matrix ${\bf J}({\bf R}, {\bf t})$ as follows:
\begin{equation*}
{\bf J}({\bf R}, {\bf t})=\begin{bmatrix}
\vdots ~~~~~~~~~~~~~~\vdots\\
\frac{\partial f_{i}({\bf R},{\bf s},{\bf t})}{\partial {\bf s}^\top} \Big\rvert_{{\bf s}=0}~~\frac{\partial f_{i}({\bf R},{\bf s},{\bf t})}{\partial {\bf t}^\top} \Big\rvert_{{\bf s}=0}\\
\vdots ~~~~~~~~~~~~~~\vdots
\end{bmatrix} \in \mathbb R^{2n \times 6} .
\end{equation*}
Denote the GN refinement of $\hat {\bf s}_n$ and $\hat {\bf t}_n$ by $\hat {\bf s}^{\rm GN}_n$ and $\hat {\bf t}^{\rm GN}_n$, respectively. Given the initial consistent estimate $\hat {\bf R}_n^{\rm BE}$ and $\hat {\bf t}_n^{\rm BE}$, we have 
\begin{equation}  \label{GN_t}
\begin{split}
\begin{bmatrix}
\hat {\bf s}^{\rm GN}_n \\
\hat {\bf t}^{\rm GN}_n
\end{bmatrix}=
\begin{bmatrix}
{\bf 0} \\
\hat {\bf t}_n^{\rm BE}
\end{bmatrix}+ &
\left({\bf J}^\top (\hat {\bf R}_n^{\rm BE}, \hat {\bf t}_n^{\rm BE}){\bf J}(\hat {\bf R}_n^{\rm BE}, \hat {\bf t}_n^{\rm BE}) \right) ^{-1} \\
& {\bf J}^\top (\hat {\bf R}_n^{\rm BE}, \hat {\bf t}_n^{\rm BE}) \left({\bf b}-f(\hat {\bf R}_n^{\rm BE},\hat {\bf t}_n^{\rm BE}) \right) ,
\end{split}
\end{equation}
where $f(\hat{\bf R},\hat{\bf t})=[f_{1}(\hat{\bf R},0,\hat{\bf t})^\top~\cdots~f_{n}(\hat{\bf R},0,\hat{\bf t})^\top]^\top$.
As such, 
\begin{equation} \label{GN_R}
    \hat{{\bf R}}^{\rm GN}_n=\hat{{\bf R}}_n^{\rm BE}\exp\left({{}\hat{{\bf s}}_n^{\rm GN}}^{\wedge} \right).
\end{equation}

We remark here that our devised PnP solver has overall $O(n)$ computational complexity. Specifically, the $O(n)$ calculations in our algorithm include the computation of the centroid of 3D reference points $\bar {\bf p}^w$, the estimation of noise variance, the calculation of the consistent estimate $\hat {\bm \theta}_n^{\rm BE}$, and the constrained GN iterations. The other operations consume constant time. Therefore, our PnP solver is efficient and favorable in large sample applications. The whole algorithm is summarized in Algorithm~\ref{pseudo_algorithm_CPnP}.
\begin{algorithm}
	\caption{ CPnP: Consistent PnP Pose Estimator}
	\label{pseudo_algorithm_CPnP}
	\begin{algorithmic}[1]
		\Statex {\bf Input:} 3D points $({\bf p}_i^w)_{i=1}^{n}$ and 2D projections $({\bf q}_i)_{i=1}^{n}$.
		\Statex {\bf Output:} the estimates of $\bf R$ and $\bf t$.
		\State Calculate $({\bf q}_i')_{i=1}^{n}$ according to~\eqref{projection_equation};
		\State Estimate the variance of projection noises;
		\State Calculate the bias-eliminated solution $\hat {\bm \theta}_n^{\rm BE}$ in~\eqref{bias_eliminated_solution};
		\State Recover $\hat {\bf R}_n^{\rm BE}$ and $\hat {\bf t}_n^{\rm BE}$ according to~\eqref{recover_alpha}-\eqref{recover_t};
		\State Project the rotation matrix into ${\rm SO}(3)$ using~\eqref{SVD_projection};
		\State Refine the estimate using the GN iterations~\eqref{GN_t} and~\eqref{GN_R}.
	\end{algorithmic}
\end{algorithm}

\section{Experiments}

In this section, we compare our algorithm, referred to as CPnP, with some well-known PnP solvers, including EPnP, EPnP+GN~\cite{lepetit2009epnp}, MLPnP, MLPnP+GN~\cite{urban2016mlpnp}, and DLS~\cite{hesch2011direct}. The results will be presented in terms of estimation accuracy and computing time. To fairly make the comparison, we implement all the algorithms in Matlab, and all experiments are conducted via a laptop with Apple M1 Pro.

\begin{figure*}[!t]
\centering
\begin{subfigure}[b]{0.24\textwidth}
\centering
\includegraphics[width=1\textwidth]{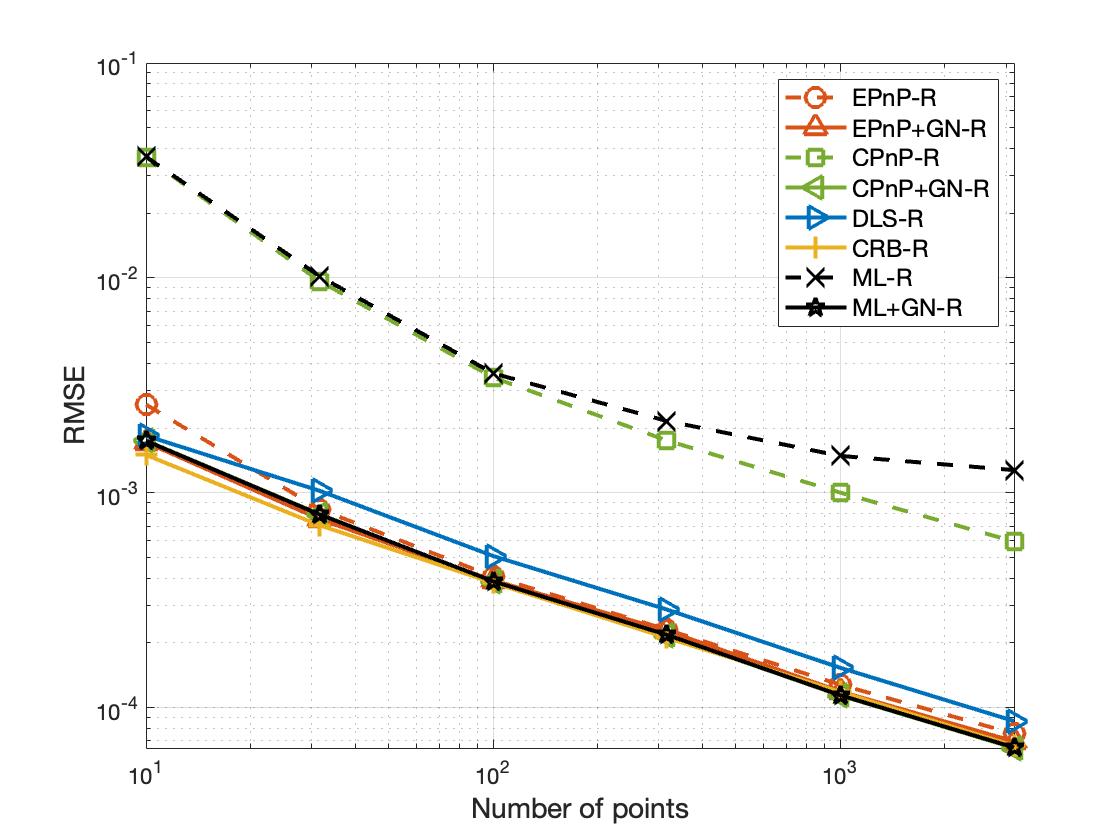}
\caption{$\sigma$=2 pixels ($\bf R$)}
\end{subfigure}
\begin{subfigure}[b]{0.24\textwidth}
\centering
\includegraphics[width=1\textwidth]{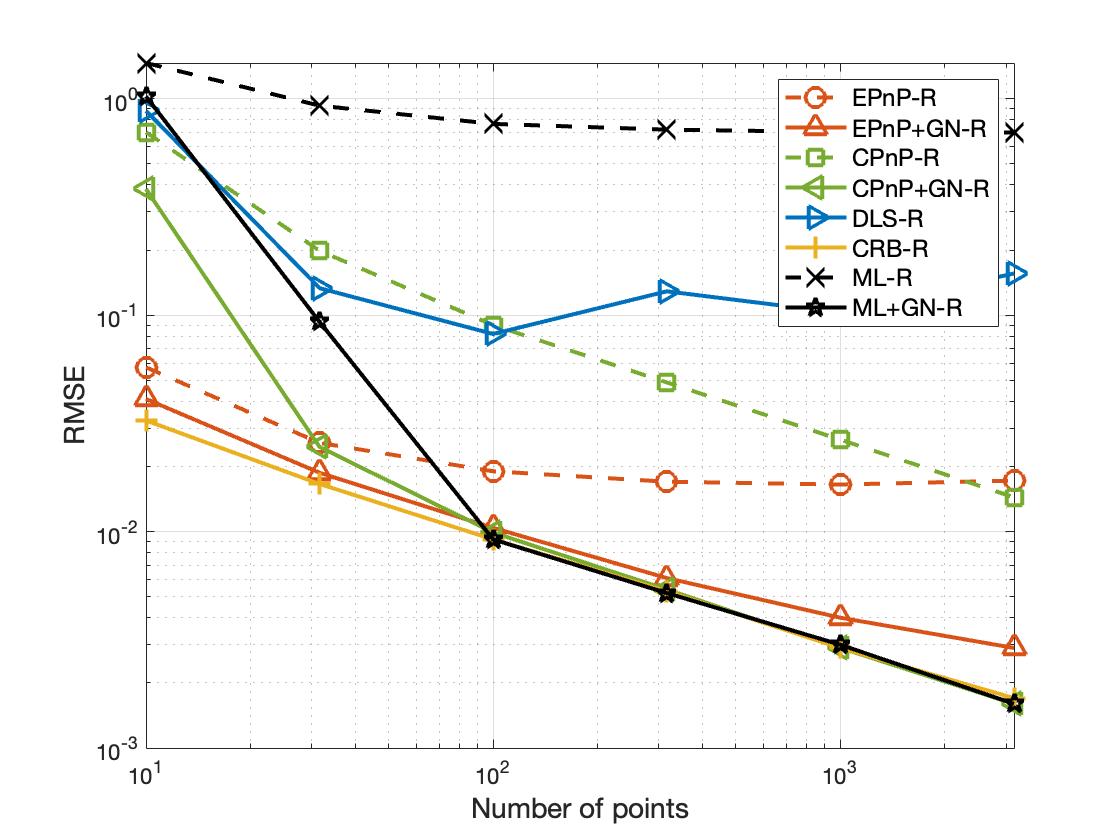}
\caption{$\sigma$=5 pixels ($\bf R$)}
\end{subfigure}
\begin{subfigure}[b]{0.24\textwidth}
\centering
\includegraphics[width=1\textwidth]{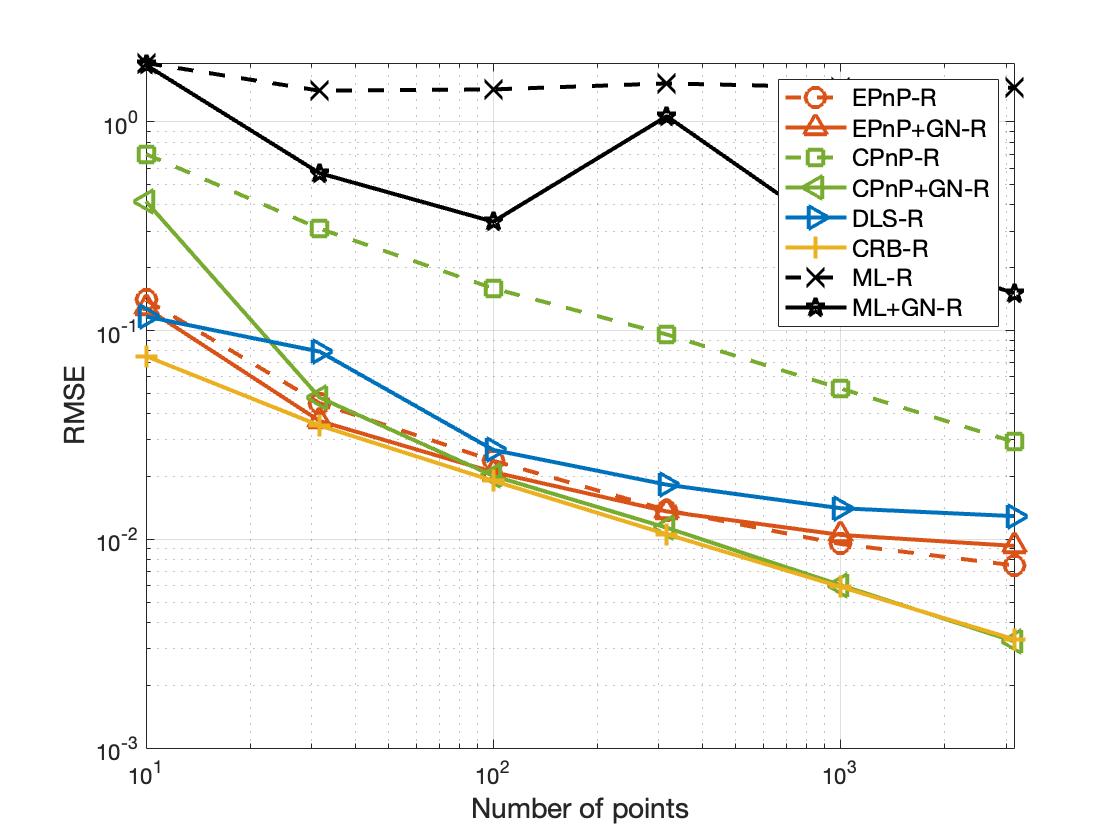}
\caption{$\sigma$=10 pixels ($\bf R$)}
\end{subfigure}
\begin{subfigure}[b]{0.24\textwidth}
\centering
\includegraphics[width=1\textwidth]{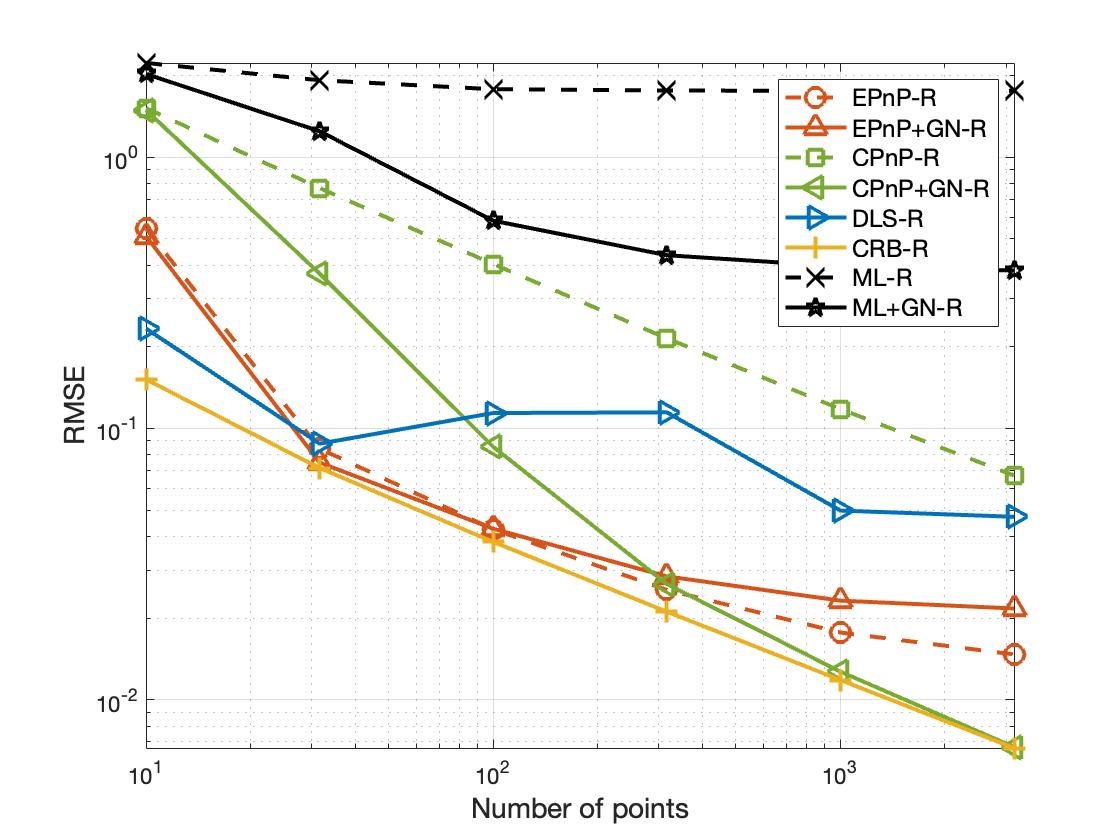}
\caption{$\sigma$=20 pixels ($\bf R$)}
\end{subfigure}
\begin{subfigure}[b]{0.24\textwidth}
\centering
\includegraphics[width=1\textwidth]{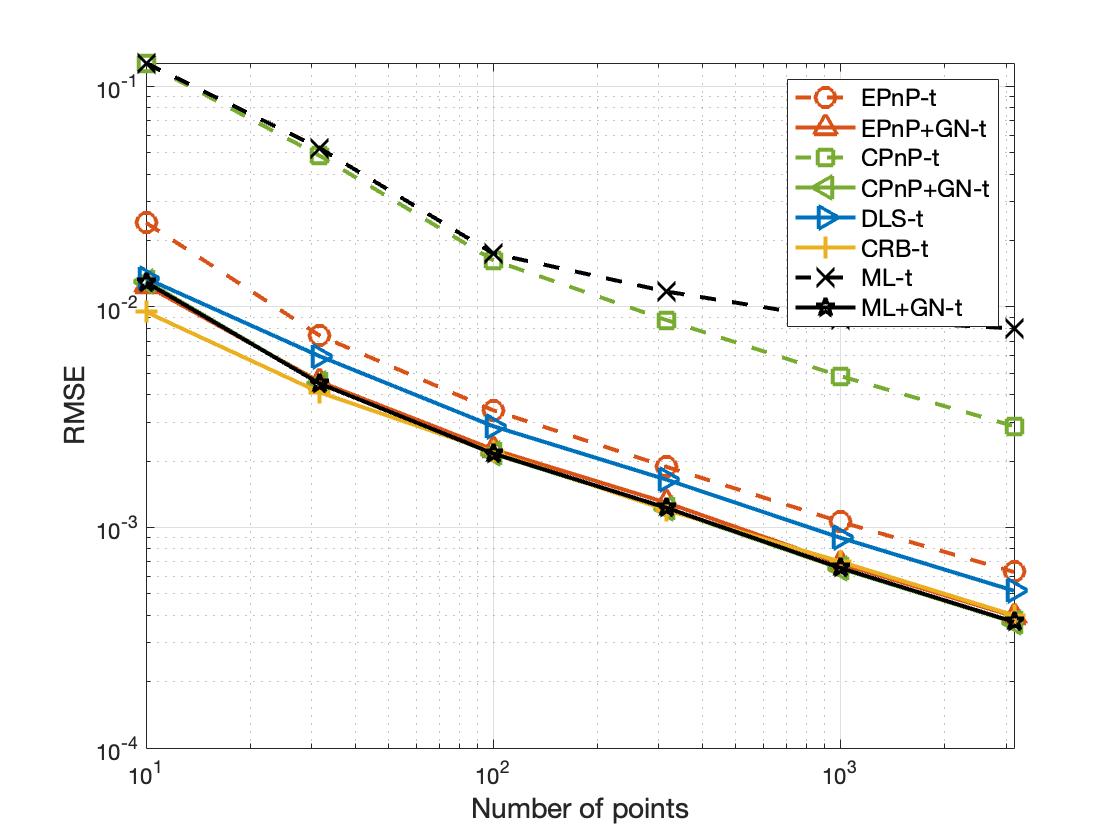}
\caption{$\sigma$=2 pixels ($\bf t$)}
\end{subfigure}
\begin{subfigure}[b]{0.24\textwidth}
\centering
\includegraphics[width=1\textwidth]{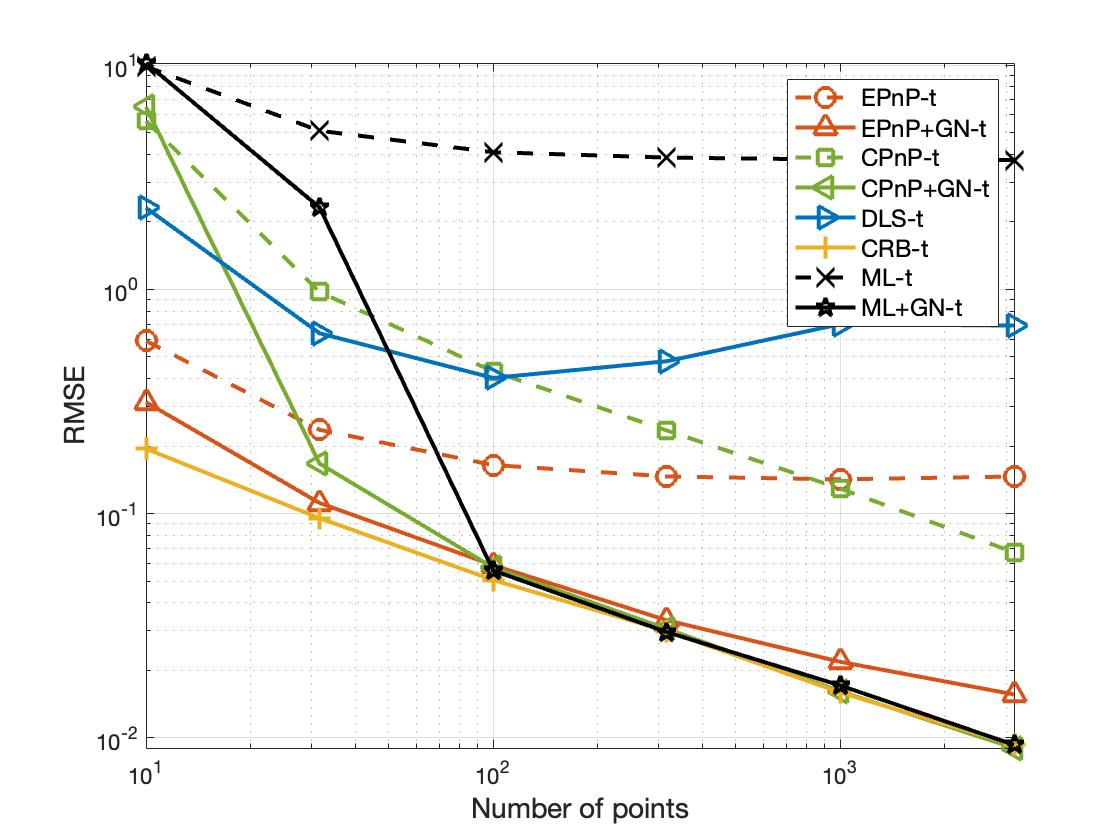}
\caption{$\sigma$=5 pixels ($\bf t$)}
\end{subfigure}
\begin{subfigure}[b]{0.24\textwidth}
\centering
\includegraphics[width=1\textwidth]{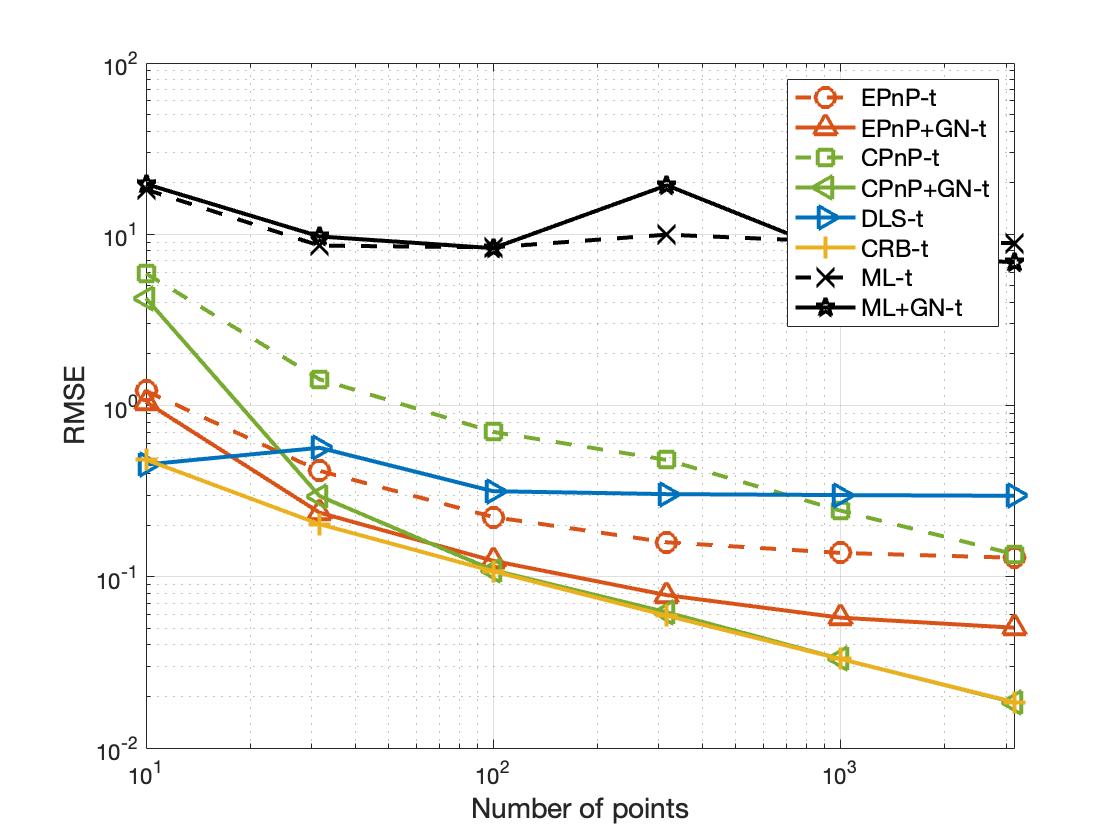}
\caption{$\sigma$=10 pixels ($\bf t$)}
\end{subfigure}
\begin{subfigure}[b]{0.24\textwidth}
\centering
\includegraphics[width=1\textwidth]{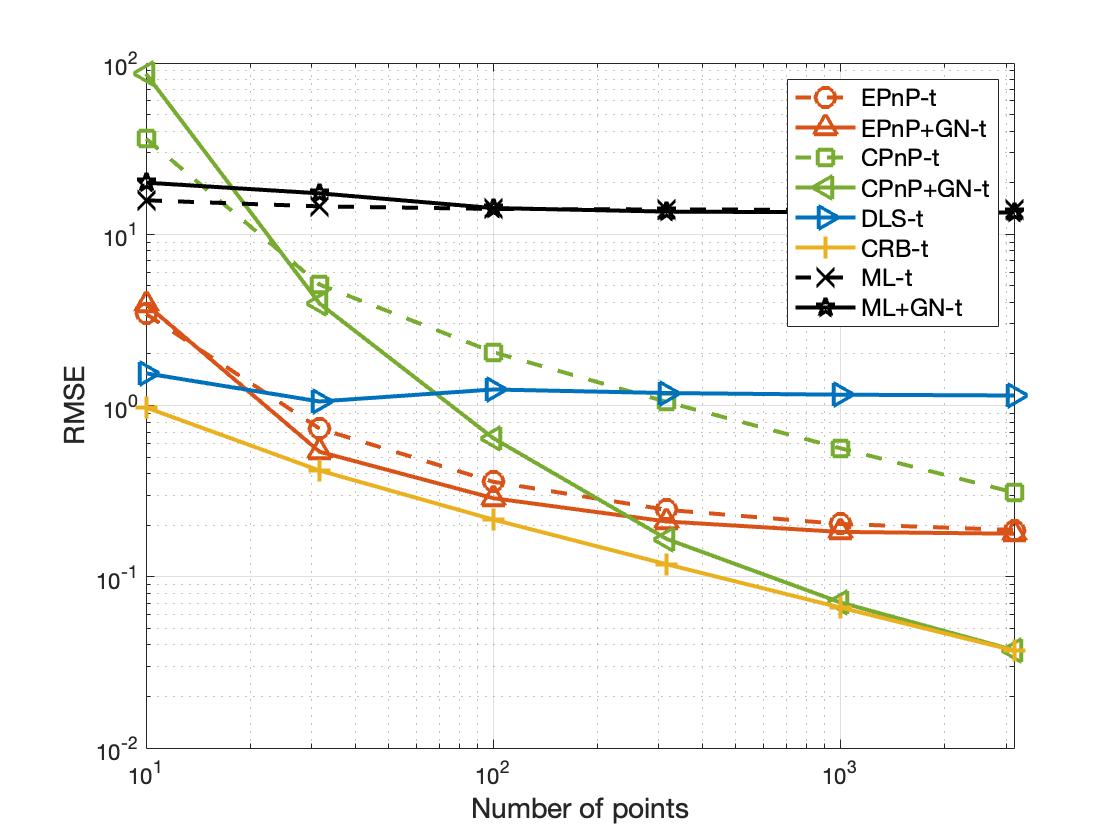}
\caption{$\sigma$=20 pixels ($\bf t$)}
\end{subfigure}
\caption{RMSE comparison among different PnP solvers with synthetic data.}
\label{RMSE_comparison_simulation}
\end{figure*}

\subsection{Experiments with Synthetic Data}
In simulations, the Euler angles of the camera are set as $[\pi/3~\pi/3~\pi/3]^\top$, and the translation vector is $[2~6~6]^\top$. For the intrinsic parameters, the focal length is set as $f_x=f_y=50$mm ($800$ pixels), and the size of the image plane is $640 \times 480$ pixels. The principle point lies in the top-left corner of the image plane and the principle point offset is $u_0=320$ pixels, $v_0=240$ pixels. For the 3D points under the camera frame, we randomly generate them from the region $[-2, 2] \times [-2, 2] \times [4, 16]$ m. After filtering out the points outside the range of the image plane, the remaining 3D points are projected onto the image plane by the projection equation~\eqref{noise_free_projection}. Specifically, the projection noise ${\bm \epsilon}_i$ is Gaussian noise whose standard deviation is $\sigma$ pixels. For each fixed $\sigma$ and number of points $n$, we execute $T=1000$ Monte Carlo tests to evaluate the following root-mean-square errors (RMSE) of each PnP solver:
\begin{align*}
    {\rm RMSE}_{\bf R} & = \sqrt{\frac{1}{T}\sum\limits_{t=1}^{T} {\|\hat {\bf R}(\omega_t)-{\bf R}^o\|}_{\rm F}^2}, \\
    {\rm RMSE}_{\bf t} & = \sqrt{\frac{1}{T}\sum\limits_{t=1}^{T} {\|\hat {\bf t}(\omega_t)-{\bf t}^o\|}^2}.
\end{align*}

\begin{figure*}[!t]
\centering
\begin{subfigure}[t]{0.24\textwidth}
\centering
\includegraphics[width=1\textwidth]{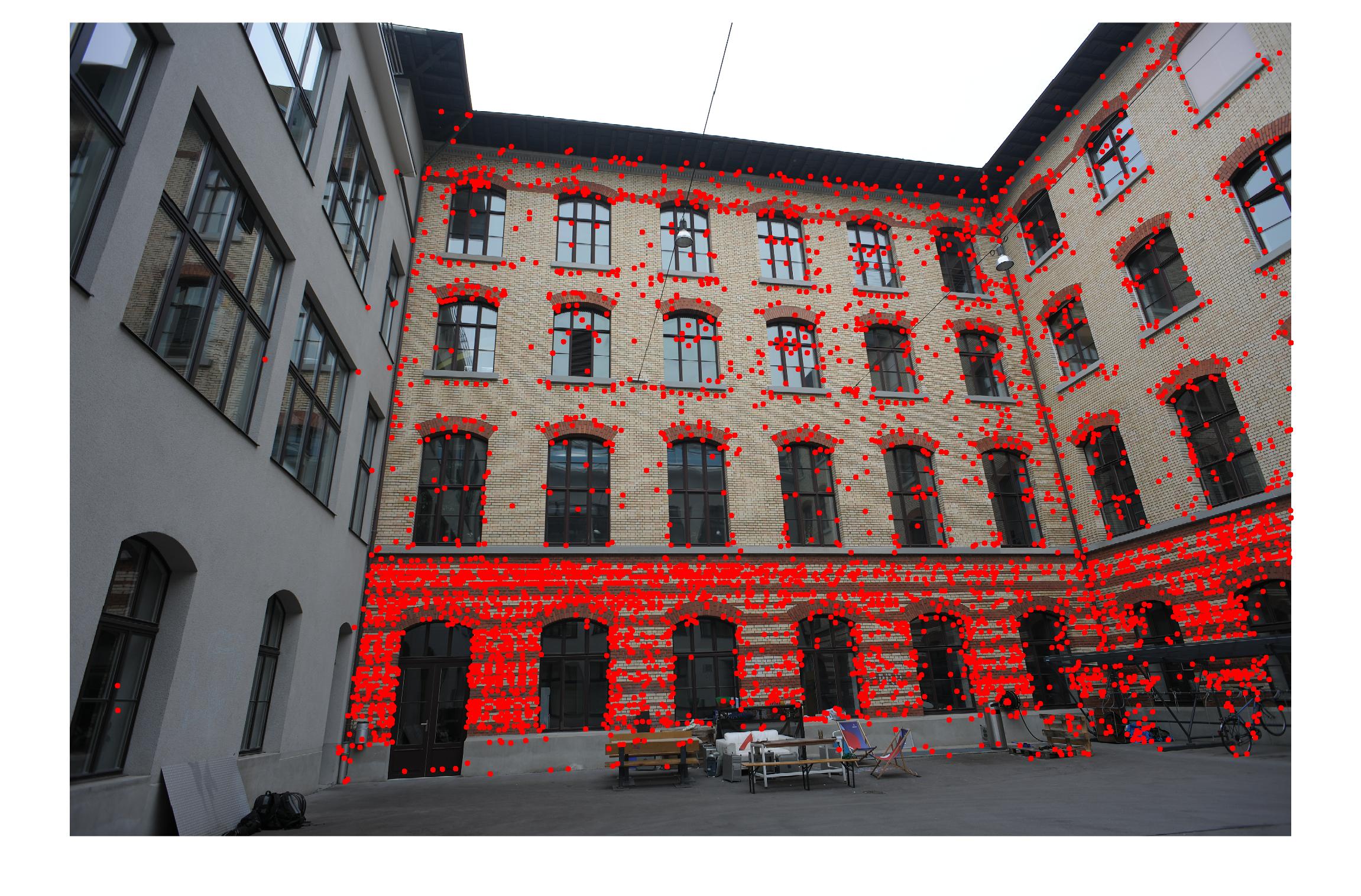}
\caption{Outdoor: Courtyard}
\end{subfigure}
\begin{subfigure}[t]{0.24\textwidth}
\centering
\includegraphics[width=1\textwidth]{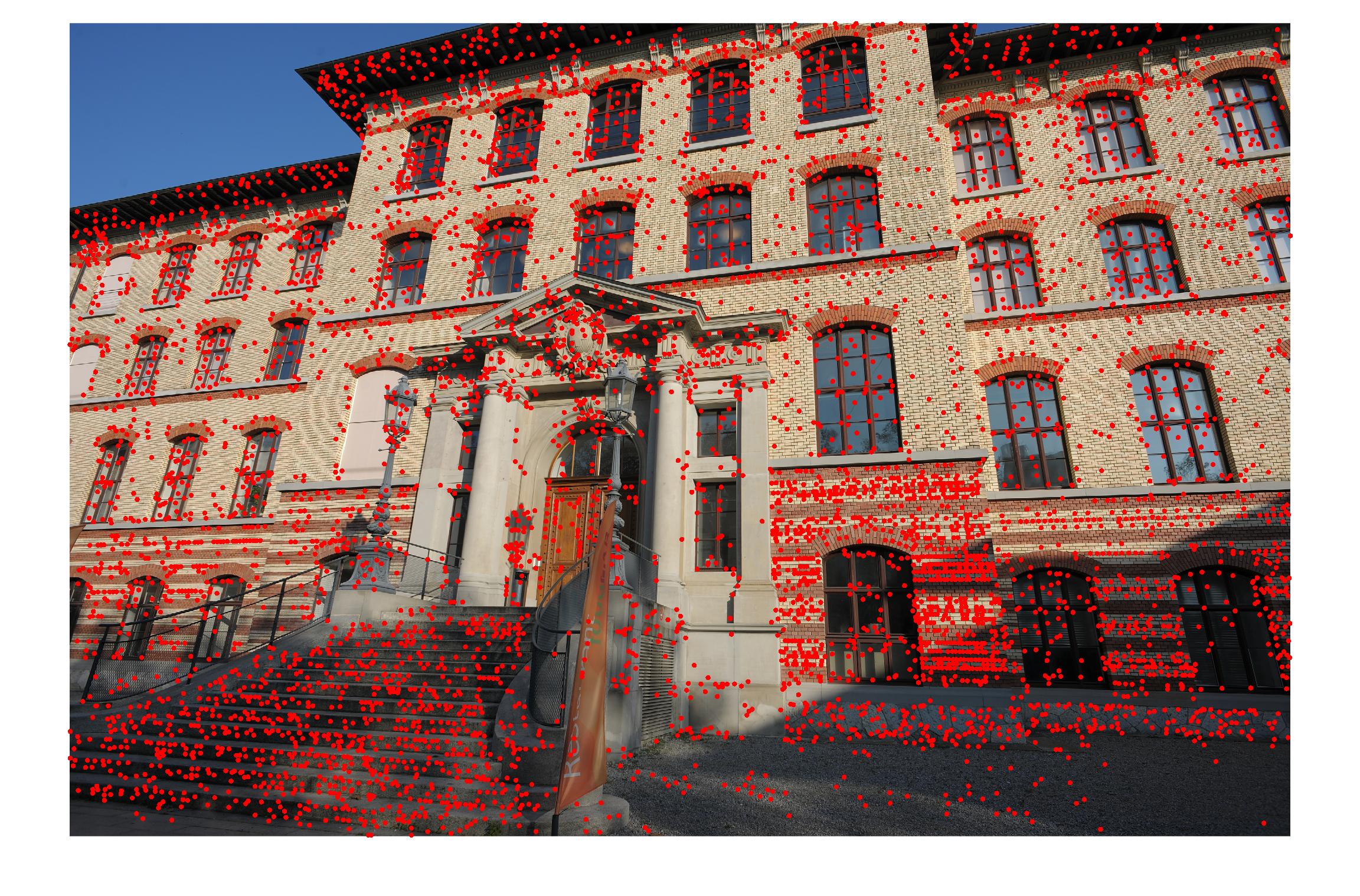}
\caption{Outdoor: Facade}
\end{subfigure}
\begin{subfigure}[t]{0.24\textwidth}
\centering
\includegraphics[width=1\textwidth]{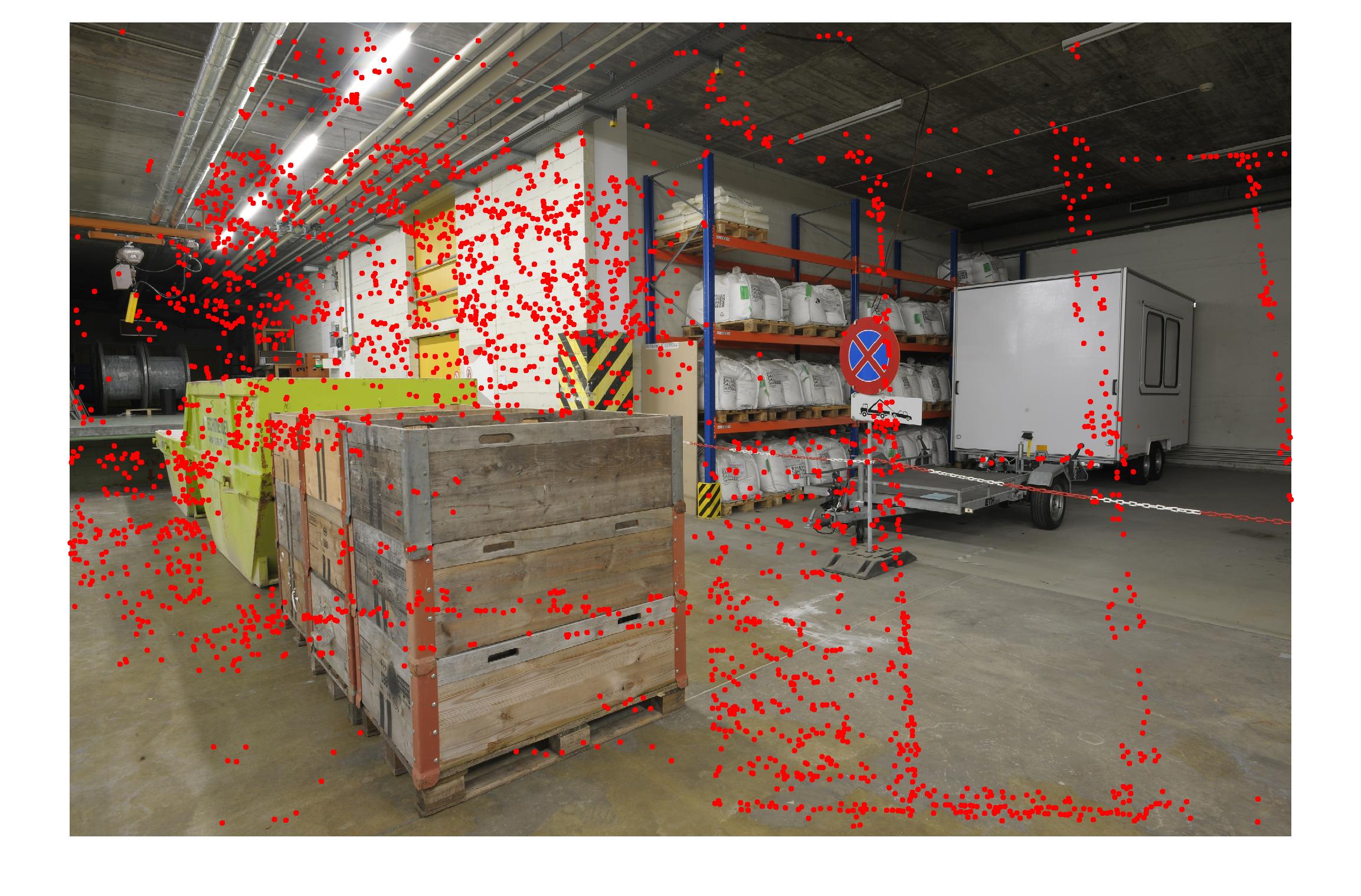}
\caption{Indoor: Delivery Area}
\end{subfigure}
\begin{subfigure}[t]{0.24\textwidth}
\centering
\includegraphics[width=1\textwidth]{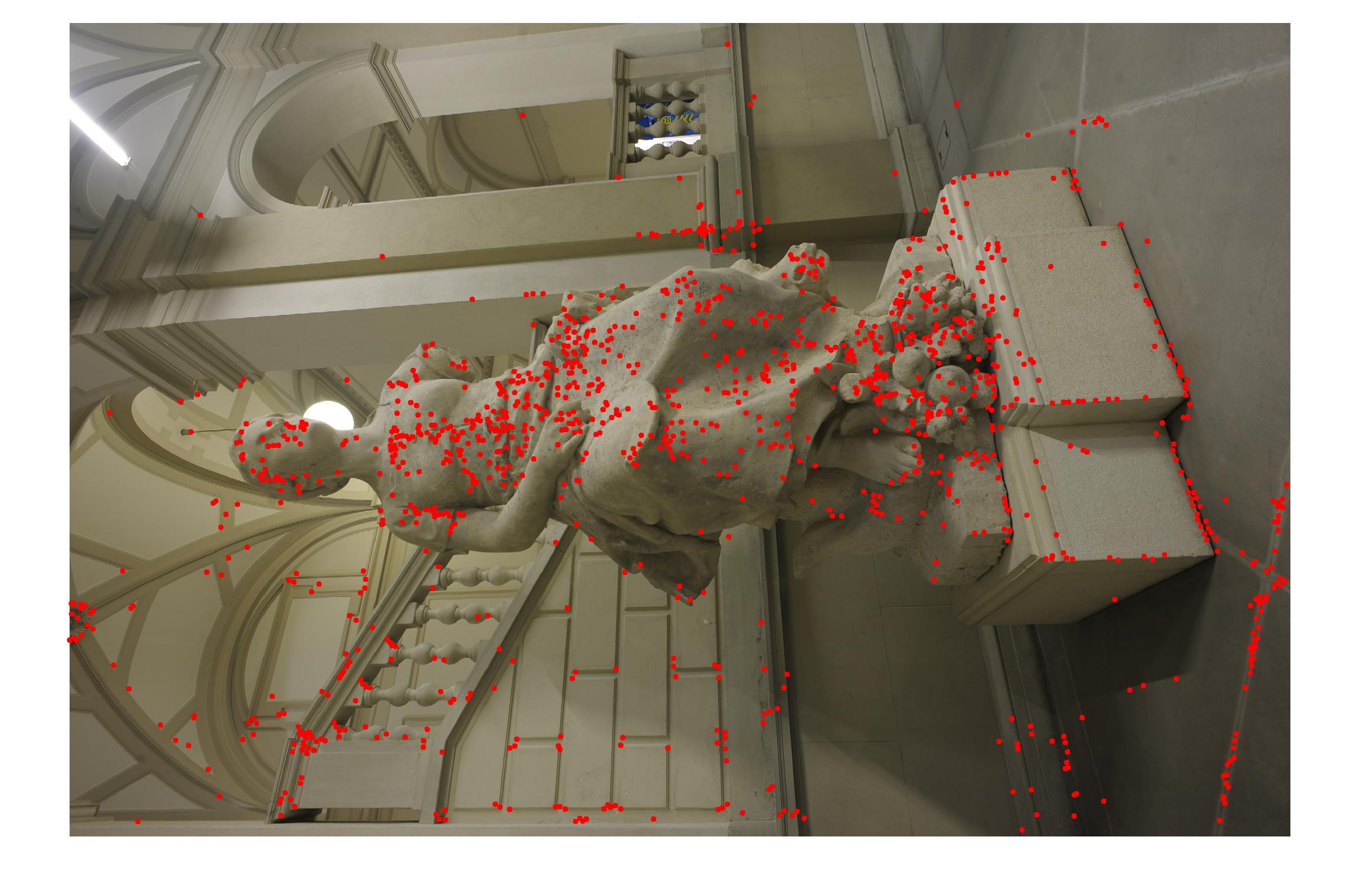}
\caption{Indoor: Statue}
\end{subfigure}
\caption{Several images in ETH3D Benchmark~\cite{schops2017multi}, among which two are outdoor scenarios and two are indoor scenarios.}
\label{ETH3D_figures}
\end{figure*}

\begin{figure*}[!t]
\centering
\begin{subfigure}[b]{0.24\textwidth}
\centering
\includegraphics[width=1\textwidth]{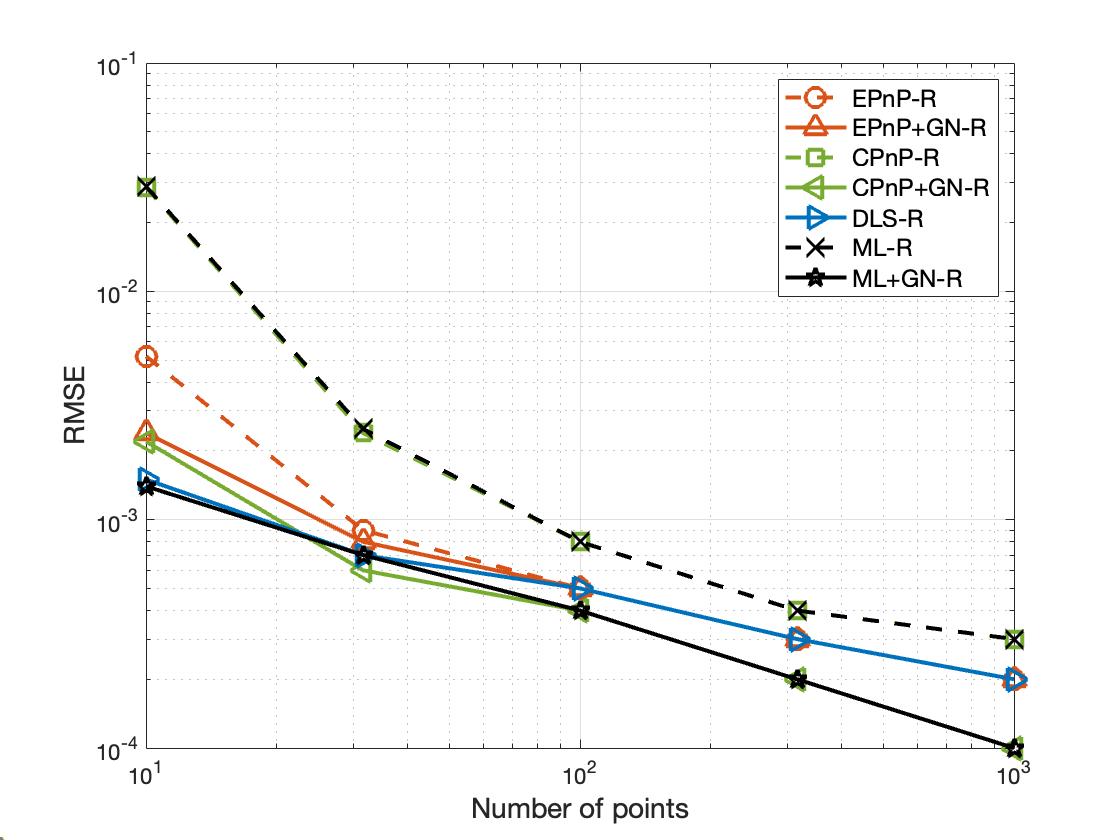}
\caption{Courtyard ($\bf R$)}
\end{subfigure}
\begin{subfigure}[b]{0.24\textwidth}
\centering
\includegraphics[width=1\textwidth]{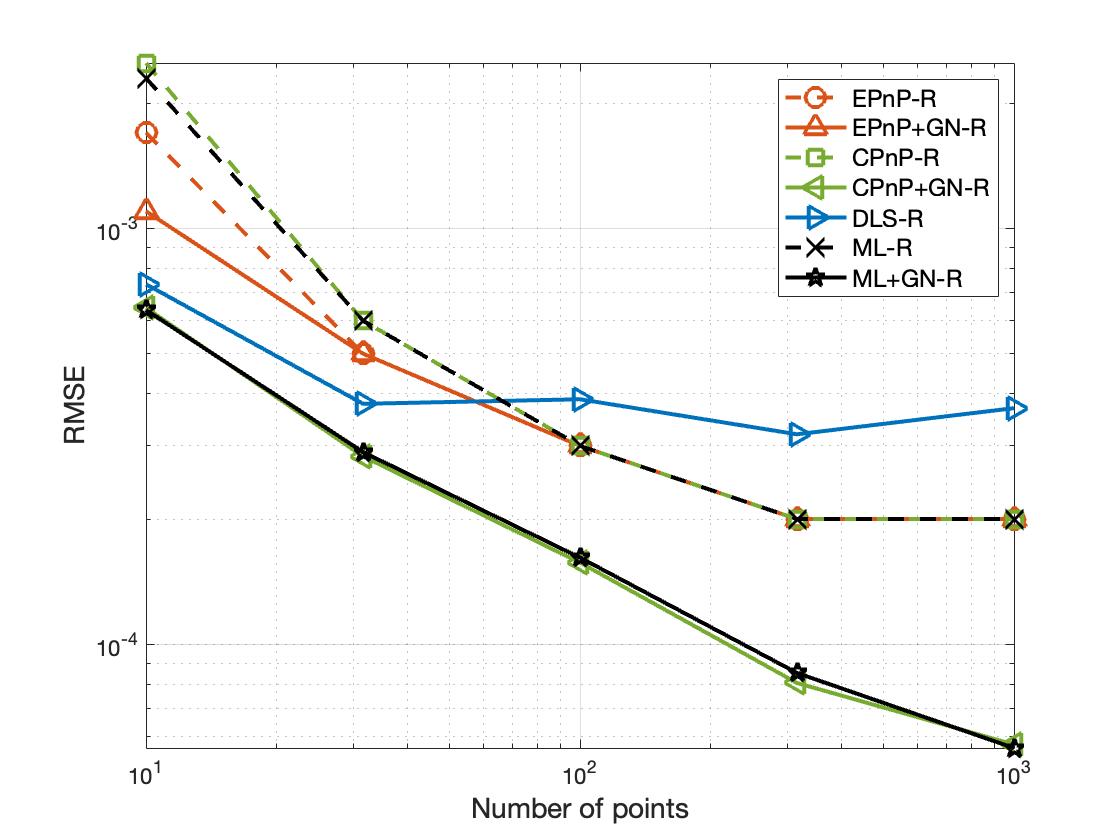}
\caption{Facade ($\bf R$)}
\end{subfigure}
\begin{subfigure}[b]{0.24\textwidth}
\centering
\includegraphics[width=1\textwidth]{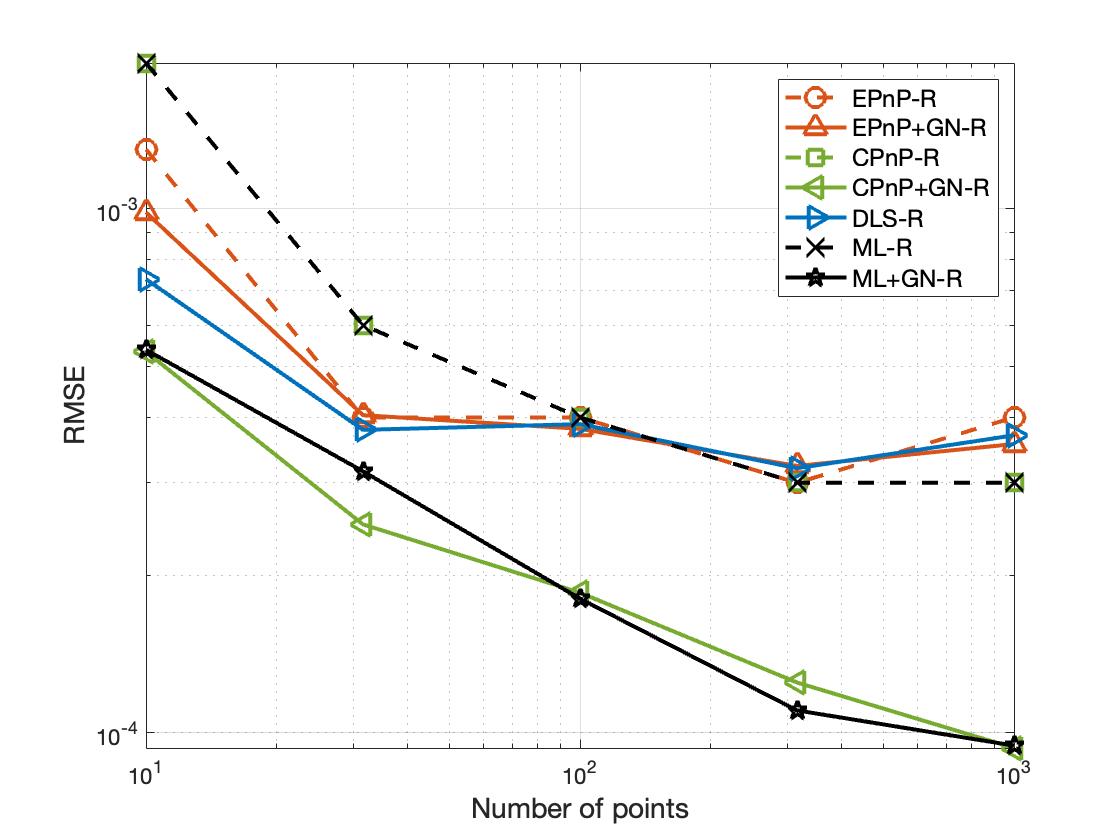}
\caption{Delivery Area ($\bf R$)}
\end{subfigure}
\begin{subfigure}[b]{0.24\textwidth}
\centering
\includegraphics[width=1\textwidth]{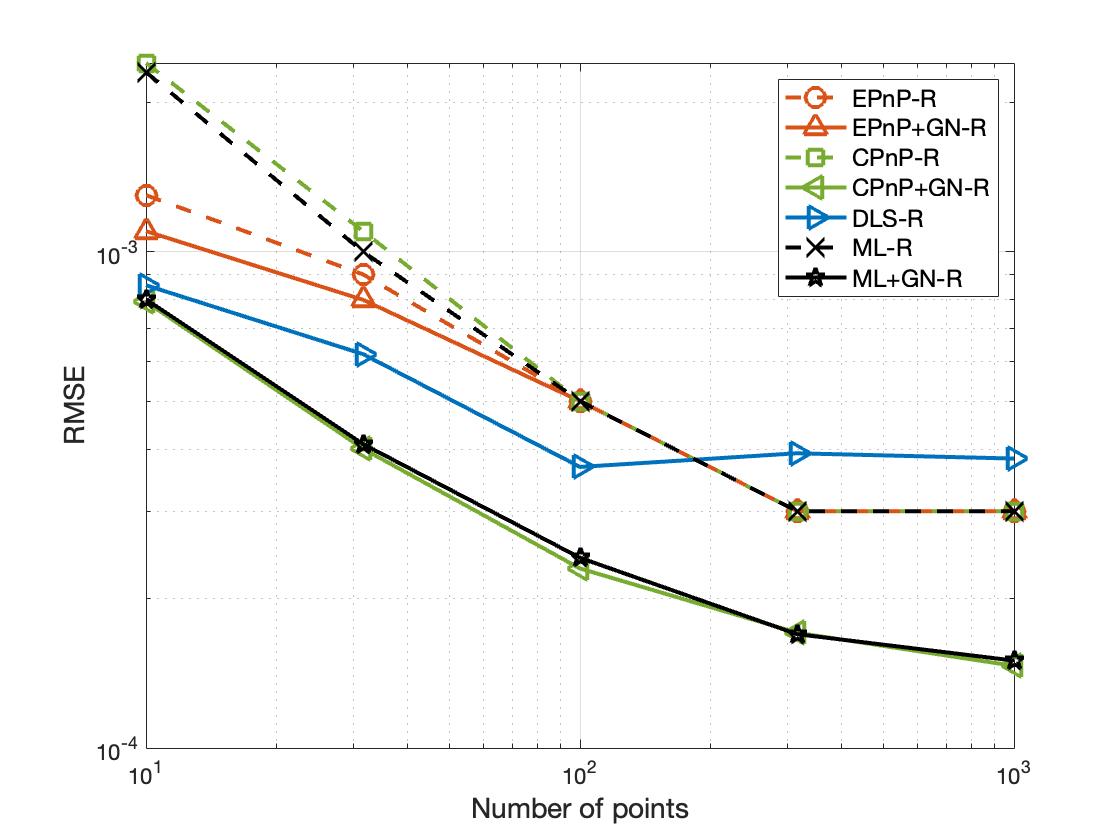}
\caption{Statue ($\bf R$)}
\end{subfigure}
\begin{subfigure}[b]{0.24\textwidth}
\centering
\includegraphics[width=1\textwidth]{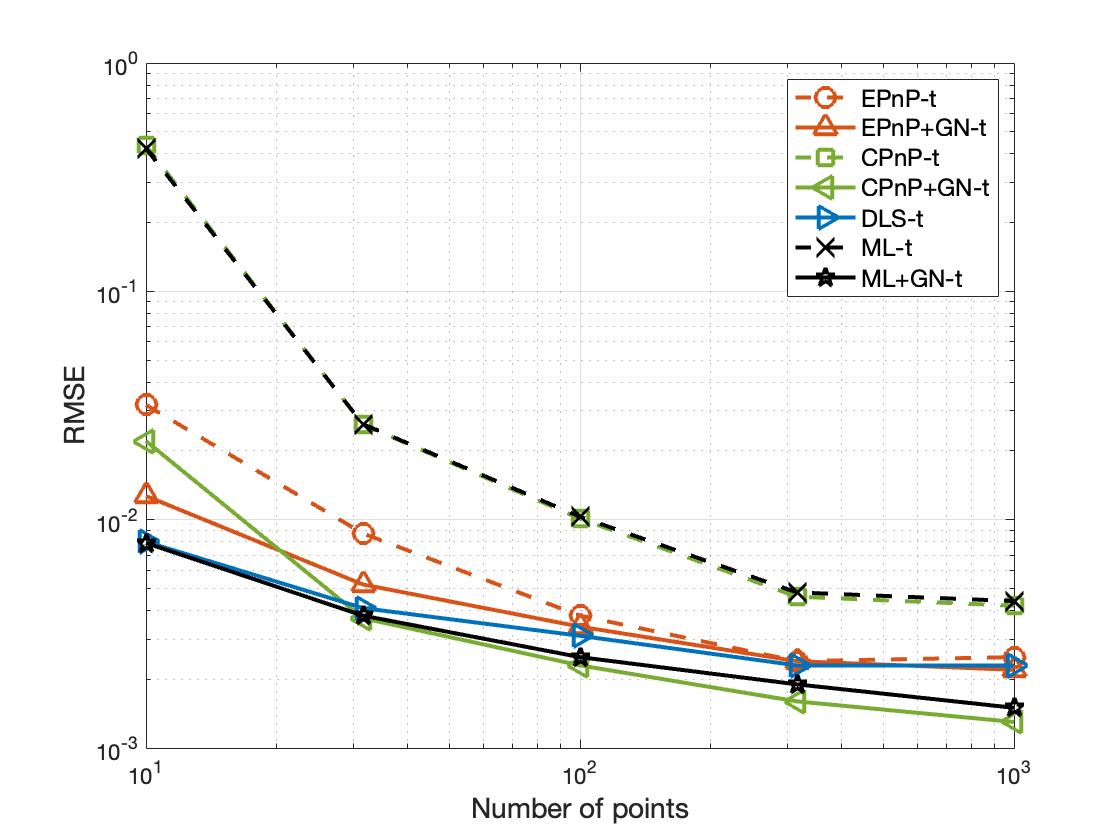}
\caption{Courtyard ($\bf t$)}
\end{subfigure}
\begin{subfigure}[b]{0.24\textwidth}
\centering
\includegraphics[width=1\textwidth]{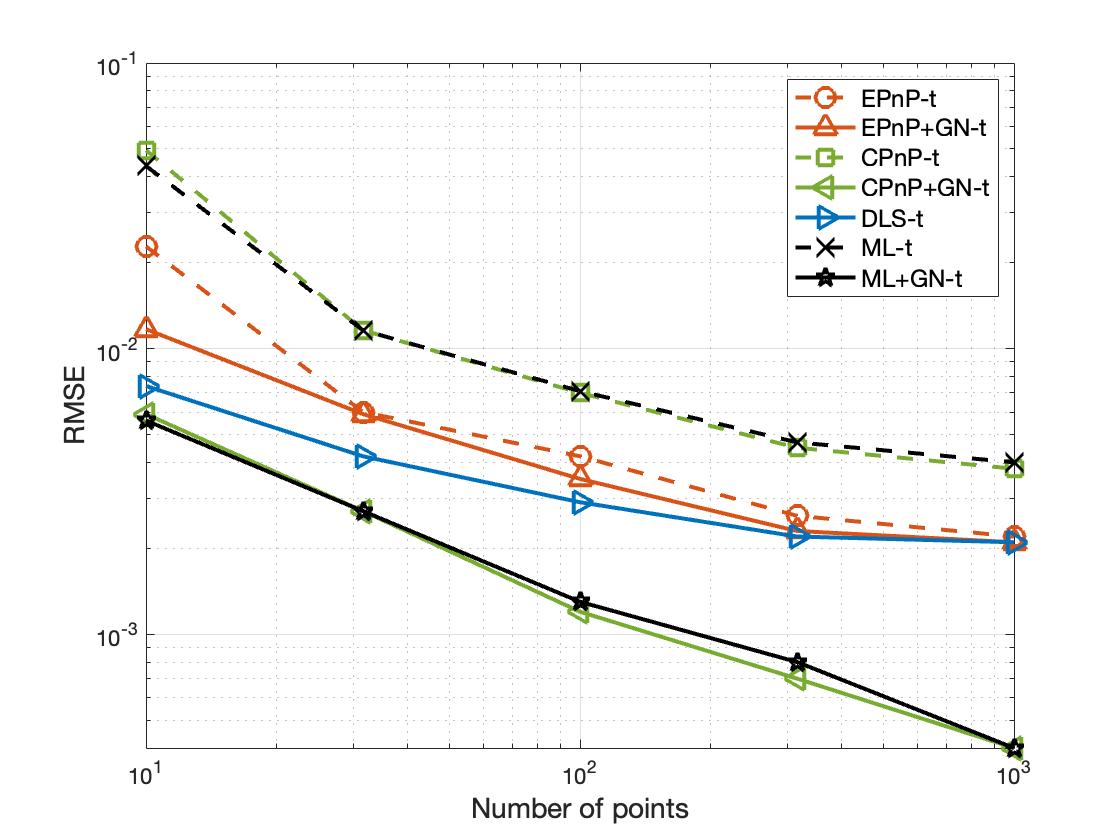}
\caption{Facade ($\bf t$)}
\end{subfigure}
\begin{subfigure}[b]{0.24\textwidth}
\centering
\includegraphics[width=1\textwidth]{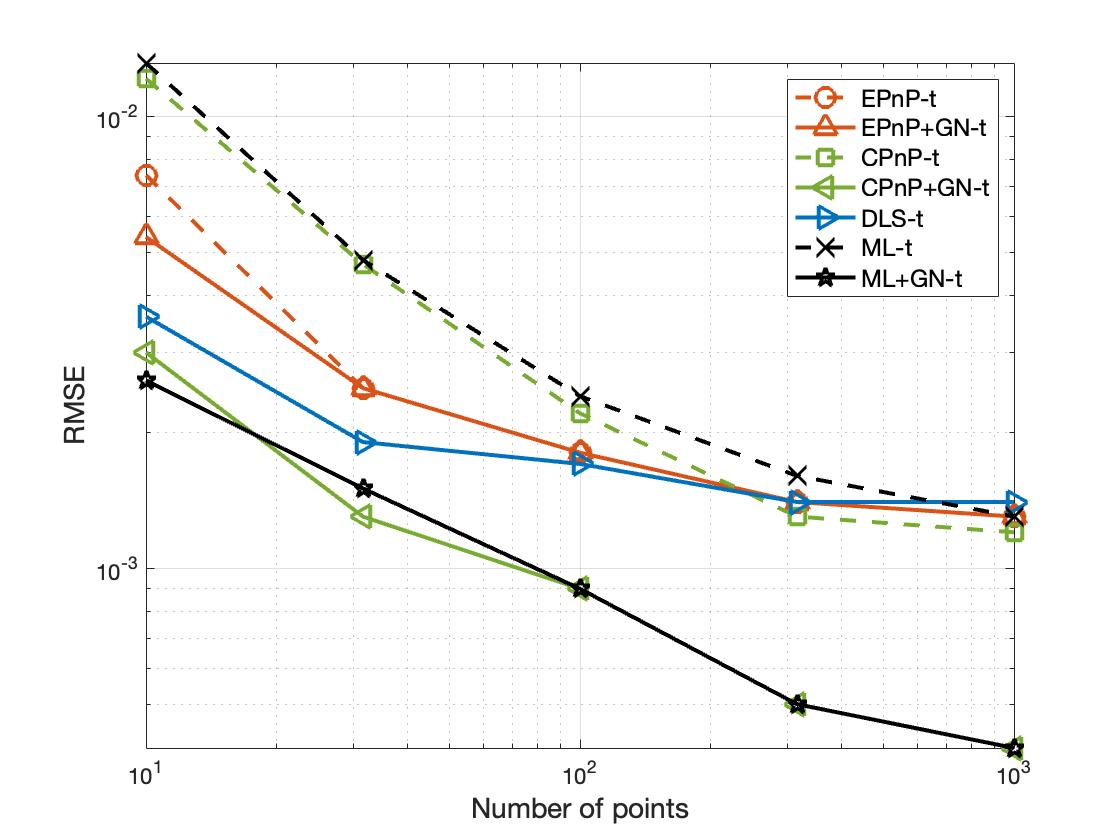}
\caption{Delivery Area ($\bf t$)}
\end{subfigure}
\begin{subfigure}[b]{0.24\textwidth}
\centering
\includegraphics[width=1\textwidth]{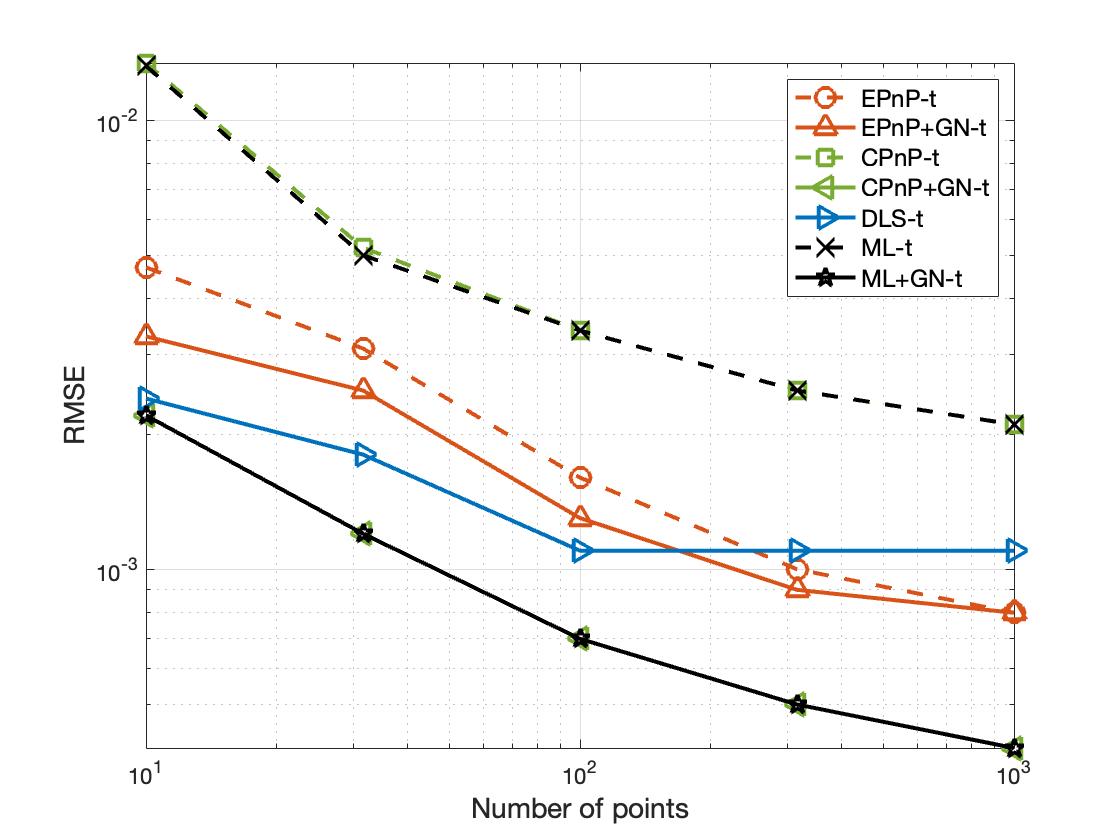}
\caption{Statue ($\bf t$)}
\end{subfigure}
\begin{subfigure}[b]{0.24\textwidth}
\centering
\includegraphics[width=1\textwidth]{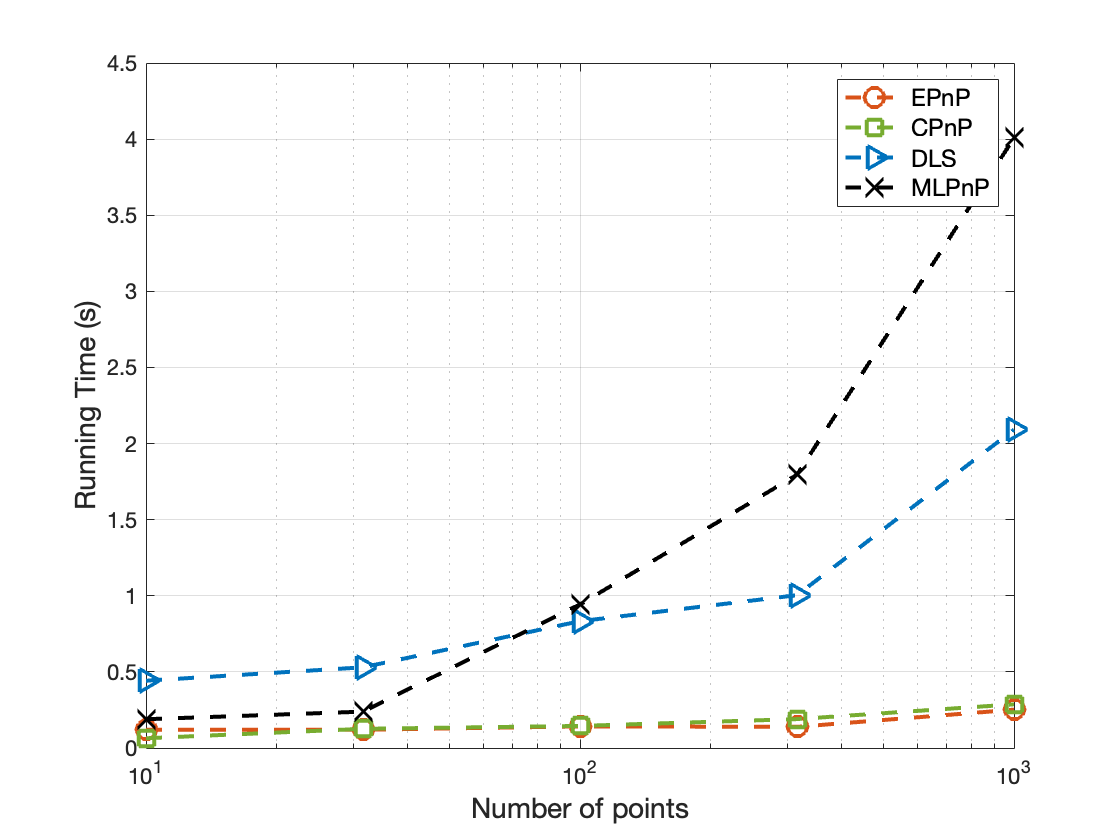}
\caption{Courtyard (CPU time)}
\end{subfigure}
\begin{subfigure}[b]{0.24\textwidth}
\centering
\includegraphics[width=1\textwidth]{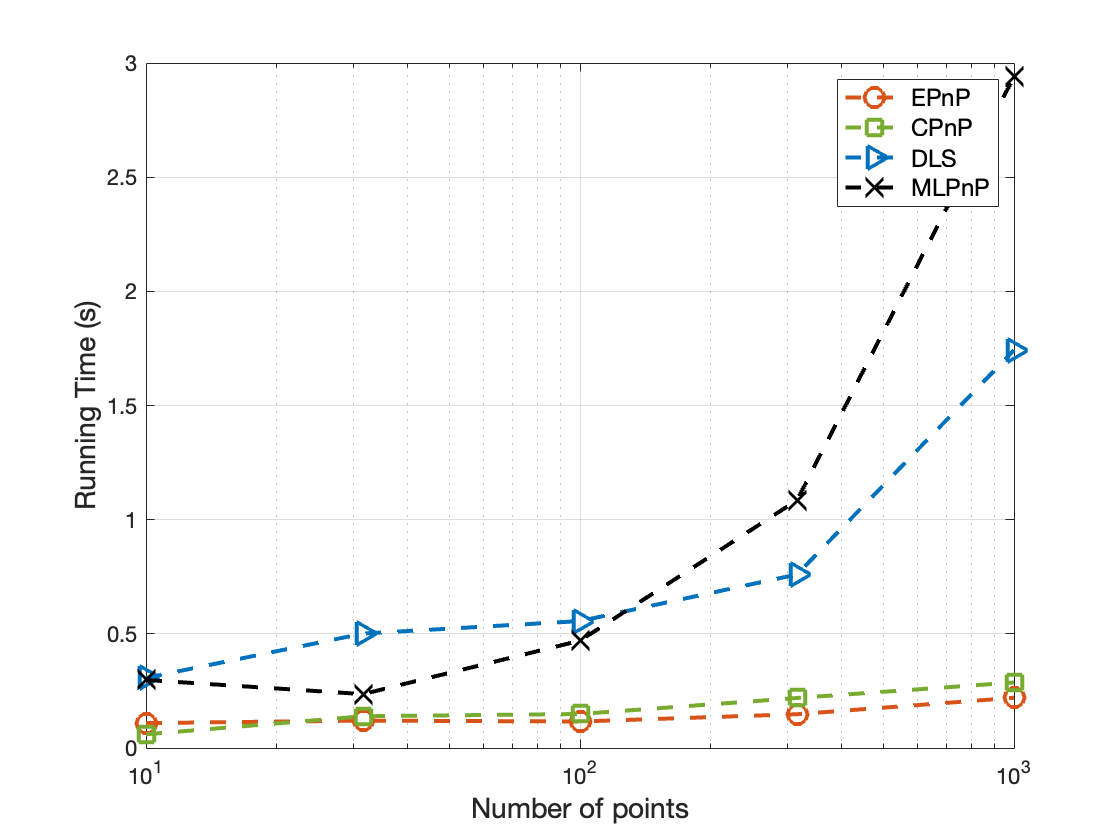}
\caption{Facade (CPU time)}
\end{subfigure}
\begin{subfigure}[b]{0.24\textwidth}
\centering
\includegraphics[width=1\textwidth]{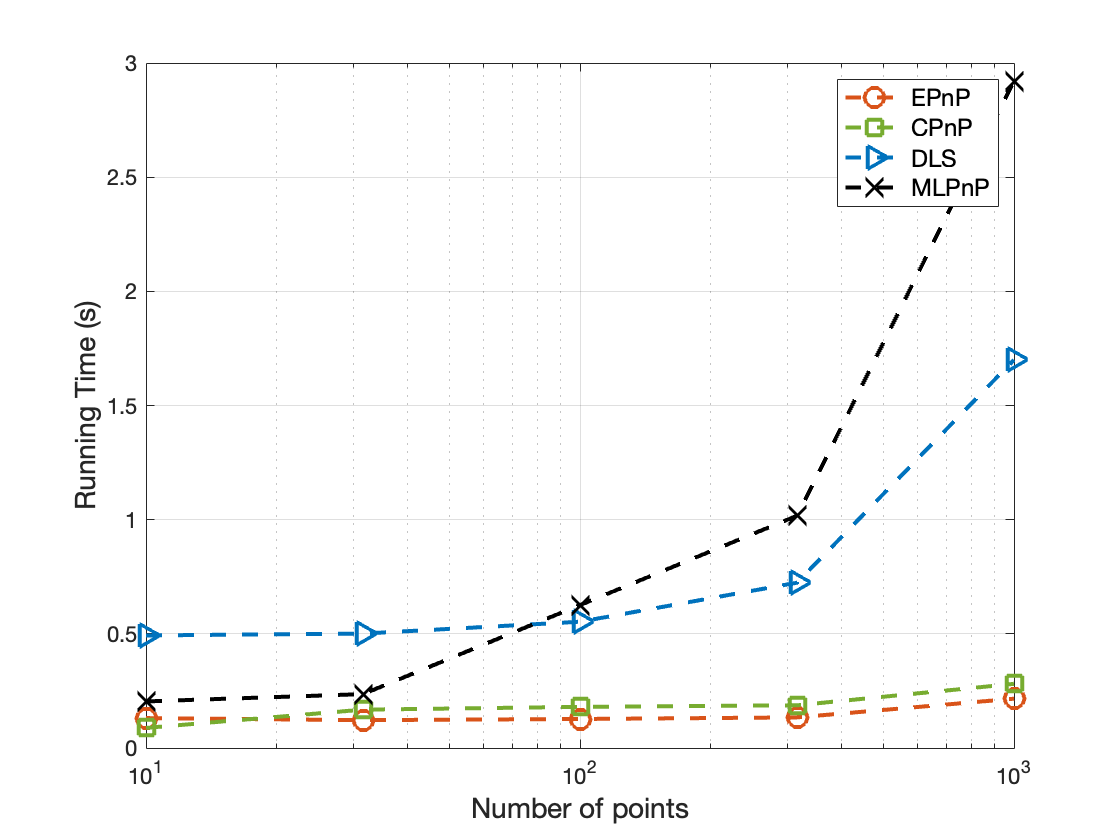}
\caption{Delivery Area (CPU time)}
\end{subfigure}
\begin{subfigure}[b]{0.24\textwidth}
\centering
\includegraphics[width=1\textwidth]{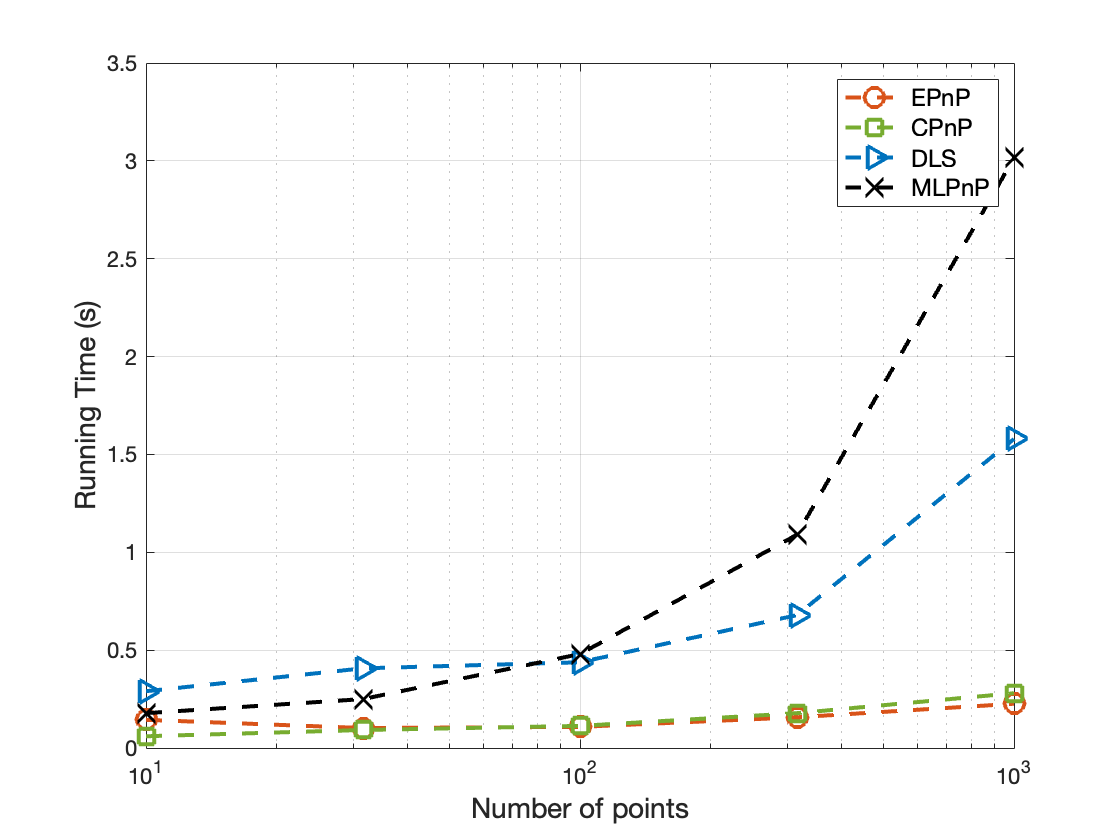}
\caption{Statue (CPU time)}
\end{subfigure}
\caption{RMSE and CPU time comparisons among different PnP solvers with real images.}
\label{Comparison_real_images}
\end{figure*}

We set the intensities of projection noises as $\sigma=2,5,10,20$ pixels, respectively, and for each choice of $\sigma$, the number of points varies from $10$ to $3000$. The RMSE comparison among different PnP solvers is presented in Fig.~\ref{RMSE_comparison_simulation} where the logarithmic axes are used. ``CRB'' in the figure offers the theoretical lower bound, named Cramer-Rao bound, for the RMSE of any unbiased estimator. For the derivation of the constrained CRB, one can refer to~\cite{stoica1998cramer}. We can see from the figure that when the intensity of noises is small, all estimators with GN iterations can achieve the CRB as the number of points increases. This is because in that case, the biases of the other estimators are negligible, and the GN iterations can make the estimate converge to the global minimizer of the original ML problem. Nevertheless, with the increase of noise intensity, all the other PnP solvers have relatively large biases. As a result, the initial estimate may not locate within the attraction region of the global minimum, and the GN iterations may only obtain local minima. However, since our proposed estimator has undergone bias elimination, it is consistent---as the number of points goes large, the estimate can converge to the true camera pose. With the initial consistent estimate, the GN iterations can further obtain the global minimum of the ML problem~\eqref{LS_problem}, achieving the theoretical lower bound. It can be seen that when the number of points exceeds several hundred, our estimator has minimal RMSEs.

\subsection{Experiments with Real Images}

We use the real-world dataset called ETH3D Benchmark~\cite{schops2017multi} to test the performances of our developed algorithm. This dataset provides original images, the intrinsic parameters of the cameras, the coordinates of 3D points and the corresponding 2D projections, and the true pose of cameras associated with each image. 
As shown in Fig.~\ref{ETH3D_figures}, the evaluation is done in four images---two are outdoor scenarios, and two are indoor scenarios. For each image, the detected 2D points will first be cleaned to make sure each 2D point can match a corresponding 3D point. We then randomly select a certain number of points from the cleaned dataset to estimate the camera pose. For each number of points, totally $T=50$ random selections are conducted to calculate the RMSE.

We vary the number of points from $10$ to $1000$. The RMSE and CPU time comparisons are shown in Fig.~\ref{Comparison_real_images}. The RMSE of our estimator is similar to that of MLPnP, which is superior to the other estimators, especially when the number of points is large. The trend of the lines of CPnP shows the consistent property of our proposed algorithm. 
It is worth noting that since the points may not be sufficient enough to support $T=50$ Monte Carlo repetitions, the slope has slowed down a little when the selected number of points is $1000$.
Regarding the comparison of computational complexity, we record the CPU time of each algorithm in $50$ Monte Carlo runs. The CPU time of different estimators is comparable when the number of points is below 100. They deviate when the number further increased, with MLPnP increasing most sharply and DLS the second. Our proposed CPnP remains remarkably stable and maintains low CPU time as the EPnP does. In the case of $1000$ points, the proposed CPnP consumes less than one second in total $50$ estimations, showing the applicability in real-time applications. 

\section{Conclusion}
In this paper, we revisited the PnP problem from the view of statistics. Variable elimination was conducted to overcome the scale ambiguity and homogeneity of the original projection model. On the basis of the consistent estimate of noise variance, a consistent estimator for the camera pose was proposed via a bias-eliminated closed-form solution. Constrained GN iterations were executed to refine the initial estimate. Our proposed CPnP estimator has two advantages: it is consistent; its computational complexity is $O(n)$. Experiments using both synthetic data and benchmark datasets showed that the proposed CPnP is superior to the other estimators for images with dense visual features, in terms of RMSE and CPU time.

\appendices
\section{Consistent estimation of noise variance} \label{noise_variance_estimation}
We adopt the method in~\cite{mu2017globally} to estimate $\sigma^2$. The work in~\cite{mu2017globally} is about the identification of a rational system where the estimate of the noise variance is obtained by solving a generalized eigenvalue problem. Next, we will briefly outline the estimation procedure.
Define two matrices by the available data and information
\begin{align}
{\bm \Phi} &=\frac{1}{n} \begin{bmatrix}
{\bf A}^\top {\bf A} ~~ {\bf A}^\top {\bf b} \\
{\bf b}^\top {\bf A} ~~ {\bf b}^\top {\bf b}
\end{bmatrix}, \\
{\bm \Delta} &= \frac{1}{n} \begin{bmatrix}
{\bf G}^\top {\bf G} ~~~ {\bf G}^\top {\bf 1}_{2n \times 1} \\
{\bf 1}_{2n \times 1}^\top {\bf G} ~~~~~~ 2n
\end{bmatrix}.
\end{align}
Based on ${\bm \Phi}$ and ${\bm \Delta}$, define a function over the variable $\lambda$ as ${\bf H}(\lambda)={\bm \Phi}-\lambda {\bm \Delta}$. Then, the function defined as $h(\lambda)={\rm det}({\bf H}(\lambda))$ is a polynomial of power $4$ over $\lambda$. As a result, $h(\lambda)=0$ has $4$ roots and denote all the roots by $\{\lambda_1,\ldots,\lambda_4\}$. Finally, the estimate of noise variance is given by
\begin{equation}\label{noise_estimate}
\hat \sigma^2 = {\rm min} \{\lambda_1,\ldots,\lambda_4\}. 
\end{equation}
Given Assumptions~\ref{noise_assump}-\ref{reference_point_assump}, based on Lemma 2 in~\cite{mu2017globally}, it holds that $\sqrt{n} (\hat \sigma^2-\sigma^2)=O_p(1)$, i.e., $\hat \sigma^2$ is a $\sqrt{n}$-consistent estimate of $\sigma^2$. 

\section{Consistency of $\hat {\bm \theta}_n^{\rm BE}$} \label{consistency_of_first_step}
Rewrite~\eqref{variable_eliminate_equation} as follows:
\begin{equation}\label{noie_free_equation}
{\bf q}_i'=\alpha {\bf W}{\bf E} ({\bf R} {\bf p}_i^w + {\bf t}) - \alpha \left({\bf p}_i^w- \bar {\bf p}^w \right) ^\top {\bf r}_3 {\bf q}_i^o+ {\bm \epsilon}_i,
\end{equation}
where ${\bf q}_i^o=[u_i^o~v_i^o]^\top$ is the noise-free counterpart of ${\bf q}_i'$. Concatenating~\eqref{noie_free_equation} for all reference points yields the matrix from:
\begin{equation}\label{noise_free_matrix_form}
{\bf b}= {\bf B} {\bm \theta} + {\bm \epsilon},
\end{equation}
where 
\begin{equation*}
{\bf B}=\begin{bmatrix}
-u_1^o \left({\bf p}_1^w- \bar {\bf p}^w \right) ^\top~f_x{{\bf p}_1^w}^\top~f_x~{\bf 0}_{4 \times 1} \\
-v_1^o \left({\bf p}_1^w- \bar {\bf p}^w \right) ^\top~{\bf 0}_{4 \times 1}~f_y{{\bf p}_1^w}^\top~f_y \\
\vdots \\
-u_n^o \left({\bf p}_n^w- \bar {\bf p}^w \right) ^\top~f_x{{\bf p}_n^w}^\top~f_x~{\bf 0}_{4 \times 1} \\
-v_n^o \left({\bf p}_n^w- \bar {\bf p}^w \right) ^\top~{\bf 0}_{4 \times 1}~f_y{{\bf p}_n^w}^\top~f_y
\end{bmatrix},~~ \\
{\bm \epsilon}=\begin{bmatrix}
{\bm \epsilon}_1 \\
\vdots \\
{\bm \epsilon}_n
\end{bmatrix}.
\end{equation*}
In general, the matrix $\bf B$ has full column rank. As a result, a closed-form solution from~\eqref{noise_free_matrix_form} is given by
\begin{equation}\label{unbiased_LS_solution}
\hat {\bm \theta}_n^{\rm UB}=\left( {\bf B}^\top {\bf B}\right) ^{-1} \left( {\bf B}^\top {\bf b}\right) .
\end{equation}
\begin{lemma} \label{consistency_of_noise_free_case}
The estimate $\hat {\bm \theta}_n^{\rm UB}$ is consistent. 
\end{lemma}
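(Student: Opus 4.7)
\medskip
\noindent\textbf{Proof proposal for Lemma \ref{consistency_of_noise_free_case}.}
The key structural feature of the auxiliary regression \eqref{noise_free_matrix_form} is that the regressor $\bf B$ is built from the \emph{noise-free} projections $u_i^o, v_i^o$ and the known 3D points ${\bf p}_i^w$, so it is uncorrelated with the noise vector ${\bm \epsilon}$. The plan is therefore to mimic the textbook argument for consistency of ordinary least squares with a nonrandom regressor. Substituting \eqref{noise_free_matrix_form} into \eqref{unbiased_LS_solution} gives
\begin{equation*}
\hat {\bm \theta}_n^{\rm UB}-{\bm \theta}^o = \left(\tfrac{1}{n}{\bf B}^\top {\bf B}\right)^{-1}\left(\tfrac{1}{n}{\bf B}^\top {\bm \epsilon}\right),
\end{equation*}
so consistency reduces to showing (i) $\tfrac{1}{n}{\bf B}^\top{\bf B}$ converges to a positive definite limit, and (ii) $\tfrac{1}{n}{\bf B}^\top{\bm \epsilon}\xrightarrow{p} 0$.

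For (ii), each coordinate of $\tfrac{1}{n}{\bf B}^\top{\bm \epsilon}$ is an average of products of a deterministic (or data-dependent but ${\bm \epsilon}$-independent) entry of $\bf B$ with an entry of one ${\bm \epsilon}_i$. Since ${\bm \epsilon}_i$ has zero mean and variance $\sigma^2<\infty$ by Assumption \ref{noise_assump}, and since the rows of $\bf B$ are bounded whenever the 3D reference points lie in a bounded region (which is implicitly the case in any meaningful PnP scenario), I would apply the Markov/Chebyshev inequality entrywise: the variance of each coordinate of $\tfrac{1}{n}{\bf B}^\top{\bm \epsilon}$ is $O(1/n)$, so it vanishes in probability. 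Alternatively a Kolmogorov SLLN argument applies directly, giving a.s.\ convergence.

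For (i), I would use Assumption \ref{reference_point_assump}. The matrix $\tfrac{1}{n}{\bf B}^\top {\bf B}$ is essentially the empirical information matrix associated with the linear parameterization \eqref{11_variables} of the camera matrix. By the cited Result 22.5 in \cite{hartley2003multiple}, a camera matrix is uniquely determined by noise-free projections whenever the 3D points avoid the critical surface; translated to our regression this means that the columns of $\bf B$ are linearly independent in the limit, so $\tfrac{1}{n}{\bf B}^\top {\bf B}$ converges to some $\Sigma\succ 0$. Combining (i) and (ii) by the continuous mapping theorem and Slutsky's theorem yields $\hat{\bm \theta}_n^{\rm UB}-{\bm \theta}^o \xrightarrow{p} 0$, which is the claimed consistency.

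The main obstacle I anticipate is step (i): rigorously translating the geometric non-degeneracy condition from Assumption \ref{reference_point_assump} into a quantitative lower bound on the smallest eigenvalue of $\tfrac{1}{n}{\bf B}^\top{\bf B}$, and showing that a well-defined limit exists. If the 3D points are modeled as i.i.d.\ samples from a distribution supported off the critical set, this follows by the SLLN applied to the rank-one outer products forming $\tfrac{1}{n}{\bf B}^\top{\bf B}$; if they are deterministic, one needs to invoke Assumption \ref{reference_point_assump} as an \emph{asymptotic} design condition guaranteeing the empirical Gram matrix stays uniformly bounded away from singularity. Step (ii), by contrast, is routine given the i.i.d.\ zero-mean noise assumption and the boundedness of $\bf B$.
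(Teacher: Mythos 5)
Your proposal is correct and follows essentially the same route as the paper: the identical decomposition $\hat{\bm \theta}_n^{\rm UB}-{\bm \theta}^o=\bigl(\tfrac{1}{n}{\bf B}^\top {\bf B}\bigr)^{-1}\tfrac{1}{n}{\bf B}^\top {\bm \epsilon}$, convergence of $\tfrac{1}{n}{\bf B}^\top{\bf B}$ to an invertible limit under Assumption~\ref{reference_point_assump}, and a Chebyshev-type bound on $\tfrac{1}{n}{\bf B}^\top{\bm\epsilon}$ (the paper invokes Lemma~4 of \cite{mu2017globally}, which is the same variance computation and additionally records the $O_p(1/\sqrt{n})$ rate needed later for the bias-elimination step). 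The gap you flag in step~(i) is likewise left informal in the paper's own proof.
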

\begin{proof}
The proof is based on the following lemma:
\begin{lemma}[{\cite[Lemma 4]{mu2017globally}}] 
\label{property_of_bounded_variance}
	Let $\{X_k\}$ be a sequence of independent random variables with $\mathbb E[X_k]=0$ and $\mathbb E\left[{X_k}^2 \right]  \leq \varphi <\infty$ for all $k$. Then, there holds $\sum_{k=1}^{n}X_k/\sqrt{n}=O_p(1)$.
\end{lemma}
As long as the sample distribution of $({\bf p}_i^w)_{i=1}^n$ converges to some distribution and Assumption~\ref{reference_point_assump} holds, $\frac{{\bf B}^\top {\bf B}}{n}$ converges to a constant matrix. By further using Lemma~\ref{property_of_bounded_variance}, we obtain
\begin{align*}
    \hat {\bm \theta}_n^{\rm UB} &=\left( \frac{1}{n}{\bf B}^\top {\bf B}\right) ^{-1} \left( \frac{1}{n}{\bf B}^\top {\bf b}\right) \\
    & = \left( \frac{1}{n}{\bf B}^\top {\bf B}\right) ^{-1} \left( \frac{1}{n}{\bf B}^\top {\bf B} {\bm \theta}^o + \frac{1}{n}{\bf B}^\top {\bm \epsilon} \right) \\
    & = {\bm \theta}^o + O_p(1/\sqrt{n}),
\end{align*}
which completes the proof. 
\end{proof}
However, ${\bf q}_i^o$ is unknown, and thus $\hat {\bm \theta}_n^{\rm UB}$ is unavailable. Note that the available information is $\bf A$ and $\bf b$. The main idea of bias elimination is to analyze the gap between $\frac{{\bf A}^\top {\bf A}}{n}$ and $\frac{{\bf B}^\top {\bf B}}{n}$ and that between $\frac{{\bf A}^\top {\bf b}}{n}$ and $\frac{{\bf B}^\top {\bf b}}{n}$. By subtracting the gaps, we can eliminate the bias of $\hat {\bm \theta}_n^{\rm B}$ and achieving the solution $\hat {\bm \theta}_n^{\rm UB}$ asymptotically. Let
\begin{equation*}
{\bf F}={\bf A}-{\bf B}=\begin{bmatrix}
-\epsilon_{11} \left({\bf p}_1^w- \bar {\bf p}^w \right) ^\top~{\bf 0}_{1 \times 8} \\
-\epsilon_{12} \left({\bf p}_1^w- \bar {\bf p}^w \right) ^\top~{\bf 0}_{1 \times 8} \\
\vdots \\
-\epsilon_{n1} \left({\bf p}_n^w- \bar {\bf p}^w \right) ^\top~{\bf 0}_{1 \times 8} \\
-\epsilon_{n2} \left({\bf p}_n^w- \bar {\bf p}^w \right) ^\top~{\bf 0}_{1 \times 8}
\end{bmatrix}.
\end{equation*}
Note that 
\begin{align*}
\frac{1}{n} {\bf A}^\top {\bf A}- \frac{1}{n} {\bf B}^\top {\bf B} &= \frac{1}{n} \left( {\bf B}^\top {\bf F} +{\bf F}^\top {\bf B}+{\bf F}^\top {\bf F}\right) \\
&= \frac{1}{n} {\bf F}^\top {\bf F} +O_p\left( \frac{1}{\sqrt{n}}\right) \\
&=\sigma^2 \frac{1}{n}  {\bf G}^\top {\bf G} +O_p\left( \frac{1}{\sqrt{n}}\right) ,
\end{align*}
and 
\begin{align*}
\frac{1}{n} {\bf A}^\top {\bf b}- \frac{1}{n} {\bf B}^\top {\bf b} &= \frac{1}{n} {\bf F}^\top {\bf b} \\
&=\sigma^2 \frac{1}{n}  {\bf G}^\top {\bf 1}_{2n \times 1} +O_p\left( \frac{1}{\sqrt{n}}\right).
\end{align*}
Therefore, 
\begin{align*}
\hat {\bm \theta}_n^{\rm BE} &=\left( {\bf A}^\top {\bf A}-\hat \sigma^2 {\bf G}^\top {\bf G}\right) ^{-1} \left( {\bf A}^\top {\bf b}-\hat \sigma^2  {\bf G}^\top {\bf 1}_{2n \times 1} \right) \\
& = \left( \frac{1}{n} {\bf B}^\top {\bf B}+O_p\left(\frac{1}{\sqrt{n}} \right)  \right) ^{-1} \left( \frac{1}{n} {\bf B}^\top {\bf b}+O_p\left(\frac{1}{\sqrt{n}} \right) \right) \\
& = \hat {\bm \theta}_n^{\rm UB} +O_p\left(\frac{1}{\sqrt{n}} \right). 
\end{align*}
Since $\hat {\bm \theta}_n^{\rm UB}$ is $\sqrt{n}$-consistent, so is $\hat {\bm \theta}_n^{\rm BE}$.

%
%
%
%
%

\ifCLASSOPTIONcaptionsoff
  \newpage
\fi

\small
\bibliographystyle{IEEEtran}
\bibliography{sj_reference}

\begin{thebibliography}{10}
\providecommand{\url}[1]{#1}
\csname url@samestyle\endcsname
\providecommand{\newblock}{\relax}
\providecommand{\bibinfo}[2]{#2}
\providecommand{\BIBentrySTDinterwordspacing}{\spaceskip=0pt\relax}
\providecommand{\BIBentryALTinterwordstretchfactor}{4}
\providecommand{\BIBentryALTinterwordspacing}{\spaceskip=\fontdimen2\font plus
\BIBentryALTinterwordstretchfactor\fontdimen3\font minus
  \fontdimen4\font\relax}
\providecommand{\BIBforeignlanguage}[2]{{%
\expandafter\ifx\csname l@#1\endcsname\relax
\typeout{** WARNING: IEEEtran.bst: No hyphenation pattern has been}%
\typeout{** loaded for the language `#1'. Using the pattern for}%
\typeout{** the default language instead.}%
\else
\language=\csname l@#1\endcsname
\fi
#2}}
\providecommand{\BIBdecl}{\relax}
\BIBdecl

\bibitem{qin2018vins}
T.~Qin, P.~Li, and S.~Shen, ``Vins-mono: A robust and versatile monocular
  visual-inertial state estimator,'' \emph{IEEE Transactions on Robotics},
  vol.~34, no.~4, pp. 1004--1020, 2018.

\bibitem{meng2017backtracking}
L.~Meng, J.~Chen, F.~Tung, J.~J. Little, J.~Valentin, and C.~W. de~Silva,
  ``Backtracking regression forests for accurate camera relocalization,'' in
  \emph{Proceedings of IEEE/RSJ International Conference on Intelligent Robots
  and Systems (IROS)}, 2017, pp. 6886--6893.

\bibitem{pumarola2017pl}
A.~Pumarola, A.~Vakhitov, A.~Agudo, A.~Sanfeliu, and F.~Moreno-Noguer,
  ``Pl-slam: Real-time monocular visual slam with points and lines,'' in
  \emph{Proceedings of IEEE International Conference on Robotics and Automation
  (ICRA)}, 2017, pp. 4503--4508.

\bibitem{vakhitov2021uncertainty}
A.~Vakhitov, L.~Ferraz, A.~Agudo, and F.~Moreno-Noguer, ``Uncertainty-aware
  camera pose estimation from points and lines,'' in \emph{Proceedings of
  IEEE/CVF Conference on Computer Vision and Pattern Recognition (CVPR)}, 2021,
  pp. 4659--4668.

\bibitem{zhou2018re}
H.~Zhou, T.~Zhang, and J.~Jagadeesan, ``Re-weighting and 1-point ransac-based p
  $ n $ n p solution to handle outliers,'' \emph{IEEE Transactions on Pattern
  Analysis and Machine Intelligence}, vol.~41, no.~12, pp. 3022--3033, 2018.

\bibitem{schops2017multi}
T.~Schops, J.~L. Schonberger, S.~Galliani, T.~Sattler, K.~Schindler,
  M.~Pollefeys, and A.~Geiger, ``A multi-view stereo benchmark with
  high-resolution images and multi-camera videos,'' in \emph{Proceedings of
  IEEE Conference on Computer Vision and Pattern Recognition (CVPR)}, 2017, pp.
  3260--3269.

\bibitem{ferraz2014leveraging}
L.~Ferraz~Colomina, X.~Binefa, and F.~Moreno-Noguer, ``Leveraging feature
  uncertainty in the pnp problem,'' in \emph{Proceedings of British Machine
  Vision Conference (BMVC)}, 2014, pp. 1--13.

\bibitem{urban2016mlpnp}
S.~Urban, J.~Leitloff, and S.~Hinz, ``Mlpnp-a real-time maximum likelihood
  solution to the perspective-n-point problem,'' \emph{arXiv preprint
  arXiv:1607.08112}, 2016.

\bibitem{mu2017globally}
B.~Mu, E.-W. Bai, W.~X. Zheng, and Q.~Zhu, ``A globally consistent nonlinear
  least squares estimator for identification of nonlinear rational systems,''
  \emph{Automatica}, vol.~77, pp. 322--335, 2017.

\bibitem{dementhon1992exact}
D.~DeMenthon and L.~S. Davis, ``Exact and approximate solutions of the
  perspective-three-point problem,'' \emph{IEEE Transactions on Pattern
  Analysis and Machine Intelligence}, vol.~14, no.~11, pp. 1100--1105, 1992.

\bibitem{kneip2011novel}
L.~Kneip, D.~Scaramuzza, and R.~Siegwart, ``A novel parametrization of the
  perspective-three-point problem for a direct computation of absolute camera
  position and orientation,'' in \emph{Proceedings of IEEE Conference on
  Computer Vision and Pattern Recognition (CVPR)}, 2011, pp. 2969--2976.

\bibitem{li2011stable}
S.~Li and C.~Xu, ``A stable direct solution of perspective-three-point
  problem,'' \emph{International Journal of Pattern Recognition and Artificial
  Intelligence}, vol.~25, no.~5, pp. 627--642, 2011.

\bibitem{bujnak2008general}
M.~Bujnak, Z.~Kukelova, and T.~Pajdla, ``A general solution to the p4p problem
  for camera with unknown focal length,'' in \emph{Proceedings of IEEE
  Conference on Computer Vision and Pattern Recognition (CVPR)}, 2008, pp.
  1--8.

\bibitem{triggs1999camera}
B.~Triggs, ``Camera pose and calibration from 4 or 5 known 3d points,'' in
  \emph{Proceedings of IEEE International Conference on Computer Vision
  (ICCV)}, 1999, pp. 278--284.

\bibitem{ansar2003linear}
A.~Ansar and K.~Daniilidis, ``Linear pose estimation from points or lines,''
  \emph{IEEE Transactions on Pattern Analysis and Machine Intelligence},
  vol.~25, no.~5, pp. 578--589, 2003.

\bibitem{quan1999linear}
L.~Quan and Z.~Lan, ``Linear n-point camera pose determination,'' \emph{IEEE
  Transactions on Pattern Analysis and Machine Intelligence}, vol.~21, no.~8,
  pp. 774--780, 1999.

\bibitem{fiore2001efficient}
P.~D. Fiore, ``Efficient linear solution of exterior orientation,'' \emph{IEEE
  Transactions on Pattern Analysis and Machine Intelligence}, vol.~23, no.~2,
  pp. 140--148, 2001.

\bibitem{lepetit2009epnp}
V.~Lepetit, F.~Moreno-Noguer, and P.~Fua, ``Epnp: An accurate o (n) solution to
  the pnp problem,'' \emph{International Journal of Computer Vision}, vol.~81,
  no.~2, pp. 155--166, 2009.

\bibitem{boukas2015towards}
E.~Boukas, A.~Gasteratos, and G.~Visentin, ``Towards orbital based global rover
  localization,'' in \emph{Proceedings of IEEE International Conference on
  Robotics and Automation (ICRA)}, 2015, pp. 2874--2881.

\bibitem{lee2020camera}
T.~E. Lee, J.~Tremblay, T.~To, J.~Cheng, T.~Mosier, O.~Kroemer, D.~Fox, and
  S.~Birchfield, ``Camera-to-robot pose estimation from a single image,'' in
  \emph{Proceedings of IEEE International Conference on Robotics and Automation
  (ICRA)}, 2020, pp. 9426--9432.

\bibitem{lambrecht2021optimizing}
J.~Lambrecht, P.~Grosenick, and M.~Meusel, ``Optimizing keypoint-based
  single-shot camera-to-robot pose estimation through shape segmentation,'' in
  \emph{Proceedings of IEEE International Conference on Robotics and Automation
  (ICRA)}, 2021, pp. 13\,843--13\,849.

\bibitem{li2012robust}
S.~Li, C.~Xu, and M.~Xie, ``A robust o (n) solution to the perspective-n-point
  problem,'' \emph{IEEE Transactions on Pattern Analysis and Machine
  Intelligence}, vol.~34, no.~7, pp. 1444--1450, 2012.

\bibitem{hesch2011direct}
J.~A. Hesch and S.~I. Roumeliotis, ``A direct least-squares (dls) method for
  pnp,'' in \emph{Proceedings of IEEE International Conference on Computer
  Vision (ICCV)}, 2011, pp. 383--390.

\bibitem{zheng2013revisiting}
Y.~Zheng, Y.~Kuang, S.~Sugimoto, K.~Astrom, and M.~Okutomi, ``Revisiting the
  pnp problem: A fast, general and optimal solution,'' in \emph{Proceedings of
  IEEE International Conference on Computer Vision (ICCV)}, 2013, pp.
  2344--2351.

\bibitem{wang2018simple}
P.~Wang, G.~Xu, Y.~Cheng, and Q.~Yu, ``A simple, robust and fast method for the
  perspective-n-point problem,'' \emph{Pattern Recognition Letters}, vol. 108,
  pp. 31--37, 2018.

\bibitem{zhou2019efficient}
L.~Zhou and M.~Kaess, ``An efficient and accurate algorithm for the
  perspecitve-n-point problem,'' in \emph{Proceedings of IEEE/RSJ International
  Conference on Intelligent Robots and Systems (IROS)}, 2019, pp. 6245--6252.

\bibitem{lu2000fast}
C.-P. Lu, G.~D. Hager, and E.~Mjolsness, ``Fast and globally convergent pose
  estimation from video images,'' \emph{IEEE Transactions on Pattern Analysis
  and Machine Intelligence}, vol.~22, no.~6, pp. 610--622, 2000.

\bibitem{pavlakos20176}
G.~Pavlakos, X.~Zhou, A.~Chan, K.~G. Derpanis, and K.~Daniilidis, ``6-dof
  object pose from semantic keypoints,'' in \emph{Proceedings of IEEE
  International Conference on Robotics and Automation (ICRA)}, 2017, pp.
  2011--2018.

\bibitem{terzakis2020consistently}
G.~Terzakis and M.~Lourakis, ``A consistently fast and globally optimal
  solution to the perspective-n-point problem,'' in \emph{Proceedings of
  European Conference on Computer Vision (ECCV)}, 2020, pp. 478--494.

\bibitem{li2017combining}
H.~Li, J.~Yao, X.~Lu, and J.~Wu, ``Combining points and lines for camera pose
  estimation and optimization in monocular visual odometry,'' in
  \emph{Proceedings of IEEE/RSJ International Conference on Intelligent Robots
  and Systems (IROS)}, 2017, pp. 1289--1296.

\bibitem{hartley2003multiple}
R.~Hartley and A.~Zisserman, \emph{Multiple view geometry in computer
  vision}.\hskip 1em plus 0.5em minus 0.4em\relax Cambridge University Press,
  2003.

\bibitem{arun1987least}
K.~S. Arun, T.~S. Huang, and S.~D. Blostein, ``Least-squares fitting of two 3-d
  point sets,'' \emph{IEEE Transactions on Pattern Analysis and Machine
  Intelligence}, no.~5, pp. 698--700, 1987.

\bibitem{stoica1998cramer}
P.~Stoica and B.~C. Ng, ``On the cram{\'e}r-rao bound under parametric
  constraints,'' \emph{IEEE Signal Processing Letters}, vol.~5, no.~7, pp.
  177--179, 1998.

\end{thebibliography}
\end{document}